\newcolumntype{?}{!{\vrule width 1pt}}
\newcommand{\ours}{PDP-SGD}
\title{Personalized DP-SGD using Sampling Mechanisms}
\author{%
  Geon Heo\\
  KAIST\\
  \texttt{geon.heo@kaist.ac.kr} \\
  \And
  Junseok Seo \\
  KAIST \\
  \texttt{ap9598@kaist.ac.kr} \\
  \AND
  Steven Euijong Whang\thanks{Corresponding author} \\
  KAIST\\
  \texttt{swhang@kaist.ac.kr} \\
}
\begin{document}

\theoremstyle{plain}
\newtheorem{theorem}{Theorem}[section]
\newtheorem{proposition}[theorem]{Proposition}
\newtheorem{lemma}[theorem]{Lemma}
\newtheorem{corollary}[theorem]{Corollary}
\theoremstyle{definition}
\newtheorem{definition}[theorem]{Definition}
\newtheorem{assumption}[theorem]{Assumption}
\theoremstyle{remark}
\newtheorem{remark}[theorem]{Remark}

\newcommand{\swhang}[1]{\textcolor{blue}{#1}}
\newcommand{\geon}[1]{\textcolor{purple}{#1}}

\maketitle

\begin{abstract}

Personalized privacy becomes critical in deep learning for Trustworthy AI. 
While Differentially Private Stochastic Gradient Descent (DP-SGD) is widely used in deep learning methods supporting privacy, it provides the same level of privacy to all individuals, which may lead to overprotection and low utility.
In practice, different users may require different privacy levels, and the model can be improved by using more information about the users with lower privacy requirements. 
There are also recent works on differential privacy of individuals when using DP-SGD, but they are mostly about individual privacy accounting and do not focus on satisfying different privacy levels. 
We thus extend DP-SGD to support a recent privacy notion called $(\Phi,\Delta)$-Personalized Differential Privacy ($(\Phi,\Delta)$-PDP), which extends an existing PDP concept called $\Phi$-PDP. Our algorithm uses a multi-round personalized sampling mechanism and embeds it within the DP-SGD iterations.  
Experiments on real datasets show that our algorithm outperforms DP-SGD and simple combinations of DP-SGD with existing PDP mechanisms in terms of model performance and efficiency due to its embedded sampling mechanism.
\end{abstract}


\section{Introduction}

Protecting an individual's privacy against various recent threats in the machine learning (ML) process is considered one of the most critical goals of Trustworthy AI research. One of the main streams of privacy research is Differential Privacy (DP), which is a mathematical definition for quantifying and comparing data privacy against an arbitrary mechanism.

DP makes it possible to quantify and mathematically prove the risk of personal information leakage and the level of privacy protection on the system using the data. Although DP originates from the database field, it is now one of the most actively studied topics for personal information protection in machine learning. Recently, the U.S. Census uses DP to protect personal information~\citep{Garfinkel2022Differential}. In addition, libraries such as Facebook's Opacus~\citep{opacus} and Google's Tensorflow Privacy~\citep{DBLP:journals/corr/abs-1812-06210} provide DP methods that can be used in the training process of machine learning models.

The biggest drawback of existing DP methods is a trade-off with model performance. If the level of privacy protection is increased excessively, the data will be safe from various kinds of privacy attacks, but the performance of the model would drastically decrease as well. The conventional DP concept probabilistically guarantees protecting the entire data, but individual protection levels cannot be adjusted. For example, suppose there are three users and their privacy preferences: ($u_1$, $\epsilon_1=0.5$), ($u_2$, $\epsilon_2=0.9$), and ($u_3$, $\epsilon_3=0.1$) as illustrated in Figure~\ref{fig:pdp}a. In this case, the conventional DP mechanisms have no choice, but to apply the most strict standards ($\epsilon_3=0.1$) to protect the privacy of all users. Although $u_1$ and $u_2$ are relatively more generous for their privacy protection, their information is overprotected, which leads to model performance degradation.

\begin{figure}[t]
\centering
    \subfloat[]{
        {\includegraphics[scale=0.34]{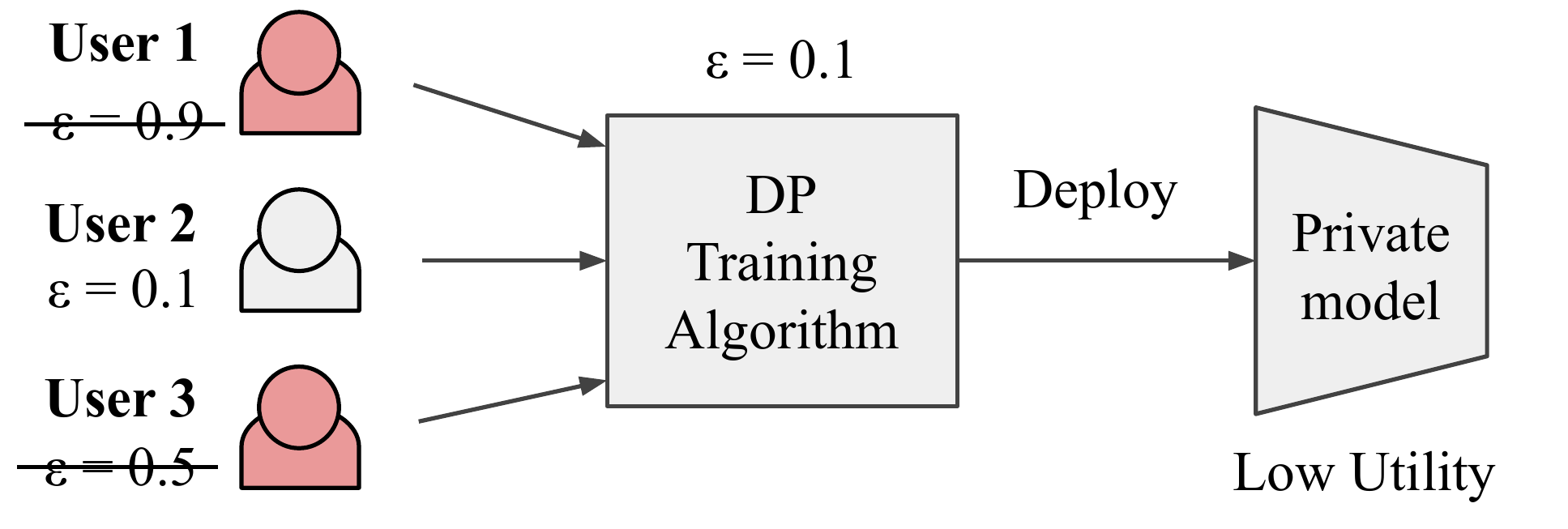}}
        }\hfill
    \subfloat[]{
        {\includegraphics[scale=0.34]{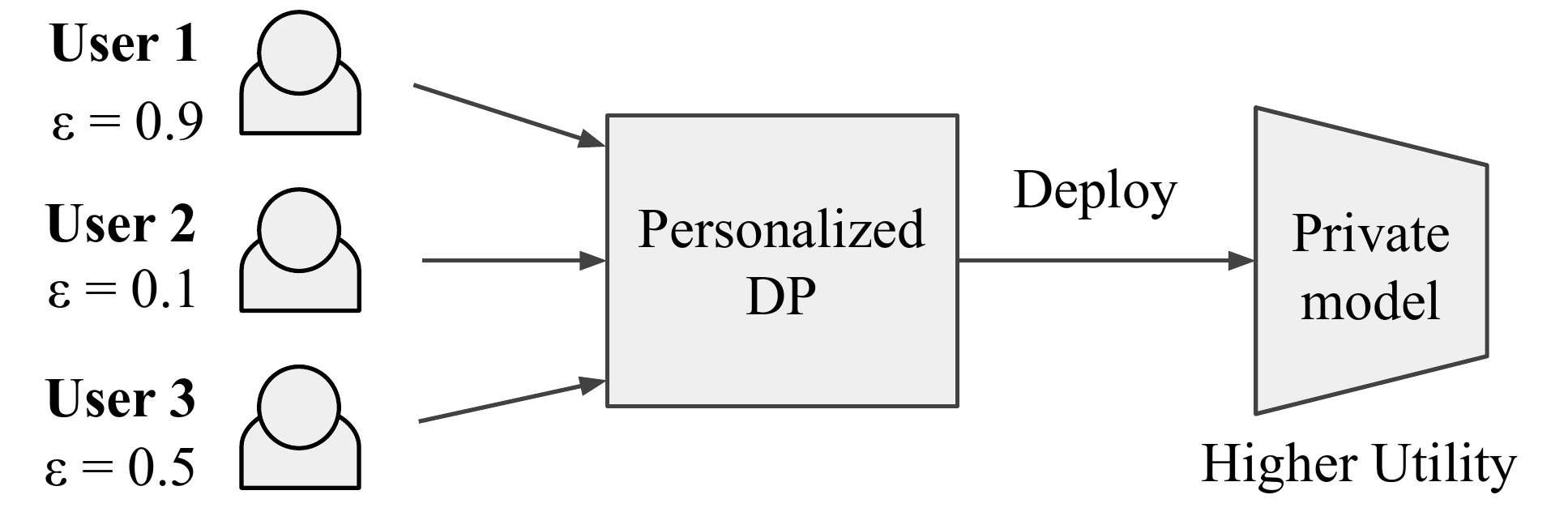}}
        }
    \caption{(a) Conventional DP overprotects by applying the strictest privacy. (b) Personalized Differential Privacy (PDP) avoids overprotection by allowing different $\epsilon$ values.}
\label{fig:pdp}
\vspace{-0.5cm}
\end{figure}


To compensate for this drawback, Personalized Differential Privacy (PDP)~\citep{jorgensen2015conservative} was proposed where one can specify different $\epsilon$ values for individuals in a privacy requirement set $\Phi$ and thus avoid overprotection (Figure~\ref{fig:pdp}b). As individual privacy in ML has recently gained interest, recent works on DP ML models, especially DP-SGD~\citep{abadi2016deep}, focus on quantifying individual accountability of privacy~\citep{feldman2021individual,da2022individual,koskela2022individual}. However, they only assume the setting with a uniform privacy budget. The closest work is \citet{muhl2022personalized}, which transforms PATE~\citep{papernot2016semi} with different privacy levels. However, it has disadvantages including high memory and computational costs.



In this work, we introduce how to extend $(\epsilon,\delta)$-DP mechanisms to the $(\Phi,\Delta)$-PDP (Definition~\ref{def:pdpdp}) using the personalized sampling mechanism. Here $\Phi$ reflects the individual privacy levels, and $\Delta$ reflects the failure probabilities of protection. We propose an ML training algorithm called \ours{} based on DP-SGD embedding the personalized sampling within the internal process. In addition, we show that \ours{} theoretically satisfies $(\Phi,\Delta)$-PDP. 


We summarize our contributions: (1) We propose a DP-SGD variant method internally embedding the personalized sampling mechanism with different privacy levels. (2) We provide its theoretical guarantee about $(\Phi,\Delta)$-PDP extending the conventional PDP concept. (3) We empirically show that \ours{} outperforms other conventional DP-SGD and simple combinations with PDP mechanisms in terms of performance and efficiency.


\section{Preliminaries}


\subsection{Differential Privacy \& DP-SGD}
\label{sec:dpsgd}

We first introduce the notion of $(\epsilon,\delta)$-differential privacy.

\begin{definition}
\label{def:edp}
($(\epsilon,\delta)$-Differential Privacy~\cite{dwork2006our})
A randomized mechanism $\mathcal{M}:\mathcal{D}\rightarrow\mathcal{R}$ with domain $\mathcal{D}$ and range $\mathcal{R}$ satisfies $(\epsilon,\delta)$-differential privacy if for any two adjacent inputs $d,d'\in\mathcal{D}$ such that $\left\| d-d' \right\|_1\leq 1$ and for any subset of outputs $O\subseteq R$ it holds that
\begin{equation*}
    Pr[\mathcal{M}(d) \in O]\leq e^{\epsilon}Pr[\mathcal{M}(d') \in O]+\delta.
\end{equation*}
\end{definition}

This is initially defined to bound privacy guarantees of the Gaussian mechanism that is defined as follows:
\begin{equation*}
    \mathcal{M}(d) \triangleq f(d)+\mathcal{N}(0,S^2_f \cdot \sigma^2)
\end{equation*}
where $S_f=\max_{d,d'}\left \| f(d)-f(d') \right \|_2$ for all adjacent inputs $d,d'$. The mechanism $\mathcal{M}$ satisfies $(\epsilon,\delta)$-DP if $\delta > \frac{4}{5} exp(-\frac{\sigma^2\epsilon^2}{2})$ and $\epsilon<1$~\cite{dwork2014algorithmic}.

Differentially Private Stochastic Gradient Descent (DP-SGD)~\citep{abadi2016deep} is one of the most widely used DP guaranteed deep learning algorithms, which adds noise to the gradient value during the training phase of deep learning. The algorithm satisfies ($\epsilon, \delta$)-differential privacy by adding Gaussian noise to the gradient of the model parameter while performing stochastic gradient descent. 

\subsection{Personalized DP \& Individualized DP}
\label{sec:pdp}

Conventional DP does not consider individual privacy preferences of users. Since all users are subject to the same privacy standard, there is a missed opportunity to improve utility by lowering the privacy of users who have more relaxed preferences. Personalized Differential Privacy (PDP) addresses this need by satisfying each user's privacy preference as follows:

\begin{definition}
\label{def:ppdp}
($\Phi$-Personalized Differential Privacy~\citep{jorgensen2015conservative})
In the context of a privacy specification $\Phi$ and a universe of users $\mathcal{U}$, a randomized mechanism $\mathcal{M}:\mathcal{D}\to R$ satisfies $\Phi$-personalized differential privacy ($\Phi$-PDP), if for every pair of neighboring datasets $D,D'\subset \mathcal{D}$, with $D\stackrel a\sim D'$, and for all sets $O\subseteq R$ of possible outputs,
\begin{equation}
    Pr[\mathcal{M}(D) \in O]\leq e^{\epsilon_i}\times Pr[\mathcal{M}(D') \in O],
\end{equation}
where $u_i\in \mathcal{U}$ is the user associated with tuple $d$, and $\epsilon_i \in \Phi$ denotes $u_i$'s privacy requirement.
\end{definition}

There are various mechanisms~\citep{alaggan2015heterogeneous,ebadi2015differential} that satisfy $\Phi$-PDP, and the most representative ones are the {\em sampling}~\citep{jorgensen2015conservative} and {\em partitioning}~\citep{li2017partitioning} mechanisms. Sampling mechanisms choose the data with different sampling probabilities based on their privacy preferences before applying the DP mechanism, while partitioning mechanisms split the data and perturb the results of each partition with possibly different privacy preferences. Among the two mechanisms, we use sampling as it is possible to extend it to satisfy our privacy notion of interest ($\Phi$, $\Delta$)-PDP, which is defined as follows:
\begin{definition}
\label{def:pdpdp}
(($\Phi$, $\Delta$)-Personalized DP)
In the context of a privacy specification $\Phi$ and a universe of users $\mathcal{U}$, a randomized mechanism $\mathcal{M}:\mathcal{D}\to R$ satisfies $(\Phi,\Delta)$-personalized differential privacy ($(\Phi,\Delta)$-PDP), if for every pair of neighboring datasets $D,D'\subset \mathcal{D}$ with $D\stackrel d\sim D'$, and for all sets $O\subseteq R$ of possible outputs,
\begin{equation}
    Pr[\mathcal{M}(D) \in O]\leq e^{\epsilon_i}\times Pr[\mathcal{M}(D') \in O]+\delta_i,
\end{equation}
where $u_i\in \mathcal{U}$ is a user associated with tuple $d$, $\epsilon_i \in \Phi$ denotes $u_i$'s privacy requirement, and $\delta_i \in \Delta$ is $u$'s probability for information leakage. 
\end{definition}
Definition ~\ref{def:pdpdp} is a generalization of $\Phi$-PDP and $(\epsilon,\delta)$-DP. If $\delta_i = 0 $ for all $u \in \mathcal{U}$, the definition becomes $\Phi$-PDP. If $\epsilon_i = \epsilon$ and $\delta_i = \delta$ for all $u_i \in \mathcal{U}$, the definition becomes $(\epsilon, \delta)$-DP. 



\citet{feldman2021individual}, \citet{da2022individual}, \citet{koskela2022individual}, and \citet{muhl2022personalized} use a similar definition called individualized $(\epsilon_d, \delta_d)$-DP, but this does not include user privacy requirements as a set. 

\begin{definition}
\label{def:idp}
(($\epsilon_d$, $\delta_d$)-Individualized DP)
For any data point $d \in \mathcal{D}$, a randomized mechanism $\mathcal{M}:\mathcal{D}\to R$ satisfies $(\epsilon_d,\delta_d)$-individualized differential privacy ($(\epsilon_d,\delta_d)$-IDP) if for every pair of neighboring datasets $D,D'\subset \mathcal{D}$ with $D\stackrel d\sim D'$, and for all sets $O\subseteq R$ of possible outputs,
\begin{equation}
    Pr[\mathcal{M}(D) \in O]\leq e^{\epsilon_d}\times Pr[\mathcal{M}(D') \in O]+\delta_d.
\end{equation}
\vspace{-0.7cm}
\end{definition}
The two concepts are near identical, but we use Personalized DP ($(\Phi,\Delta)$-PDP) in order to capture the scenario where users have different privacy requirements captured as a requirement set. In comparison, Individualized DP ($(\epsilon_d,\delta_d)$-IDP) focuses on satisfying the privacy of one user without regarding other users. We elaborate which concept has been used in which works in the next section.

\section{Related Work}
\label{sec:deeplearningwithindividualprivacy}

Individual Privacy Accounting and Individualized PATE have motivations similar to our work, and both assume deep learning situations. \citet{feldman2021individual} and \citet{koskela2022individual} mainly introduce a new adaptive composition theorem for RDP and GDP, respectively, and suggest a method estimating individual privacy costs. However, these techniques assume that the target privacy budget is the same for all users.
There are more individual privacy accounting methods for deep learning algorithms with DP -- Differentially Private Bagging~\citep{DBLP:conf/nips/JordonYS19} and Individual Privacy Accounting for DP-SGD~\citep{da2022individual} -- but they also only focus on how to check individual privacy loss accurately with a uniform privacy budget, whereas our work focuses more on improving utility when the privacy budgets are not uniform over the training set.

Individualized PATE~\citep{muhl2022personalized} transforms PATE~\citep{papernot2016semi} in two ways so that each data point can receive different privacy protections satisfying individualized DP. This work is the first to specify and implement individual privacy requirements of data holders in a deep learning setting. However, extending PATE also has disadvantages including high memory and computational costs. In one of its experiments, Individualized PATE requires 250 teacher networks to train a student model on the MNIST and SVHN datasets. In comparison, our methods does not need to train additional teacher models. A recent method called IDP-SGD\,\cite{boenisch2023way} also uses sampling to personalize DP-SGD. However, the sampling algorithm is stated at a high level and does not provide full theoretical details on how $(\Phi,\Delta)$-PDP is satisfied as in our work\footnote{Our research was conducted independently of IDP-SGD.}.



\section{Personalized DP-SGD}
\label{sec:pdpsgd}

We propose \ours{}, an algorithm based on DP-SGD that is tightly integrated with a personalized sampling mechanism. We first introduce the sampling mechanism and then explain how it can be embedded in DP-SGD (Section~\ref{sec:alg}). We also prove that \ours{} satisfies $(\Phi,\Delta)$-PDP (Section~\ref{sec:prove}).


\subsection{Algorithm}
\label{sec:alg}

\paragraph{Sampling Mechanism}
The {\em sampling mechanism} was first presented in \citet{jorgensen2015conservative} and is applicable to any $\epsilon$-DP algorithm. However, most DP deep learning algorithms often use Gaussian noise, which is difficult to interpret with $\epsilon$-DP. We will thus examine whether the sampling mechanism is applicable to the $(\epsilon, \delta)$-DP mechanism, and whether it satisfies  $(\Phi,\Delta)$-PDP. We first adapt the sampling mechanism to work with the $(\epsilon, \delta)$-DP mechanism as follows.

\begin{definition}
\label{def:pdpsam}
($(\epsilon,\delta)$-DP Mechanism with Sampling) Let $A_{DP}^{\tau,\delta}$ denote a randomized algorithm satisfying ($\tau, \delta$)-DP. For a dataset $D\subset \mathcal{D}$, let $f_{s}(D,\Phi,\tau)$ denote the preprocessing step that independently samples each data point $x_i \in D$ with probability
\begin{equation}
\label{eq:sam}
    \pi_i=\left\{
    \begin{matrix}
\frac{e^{\epsilon_i}-1}{e^{\tau}-1} \;\;\;\; \text{if}\;\; \epsilon_i<\tau\\ 
\;\;\;1\;\;\;\;\;\text{otherwise}
\end{matrix}\right.
\end{equation}
where $\min \Phi \leq \tau \leq \max \Phi $ is a configurable threshold with privacy budgets $\Phi=\{\epsilon_1,...,\epsilon_n\}$. Finally, the mechanism is defined as
\begin{equation}
    \mathcal{M}_{s}(D,\Phi,\tau) = A_{DP}^{\tau,\delta}(f_{s}(D,\Phi,\tau)).
\end{equation}
\end{definition}

The sampling mechanism is based on a technique called privacy amplification by sampling~\citep{DBLP:conf/nips/BalleBG18}. This mechanism protects the data points with different privacy standards by using different sampling probabilities for selecting data points. The mechanism takes a threshold and samples the data with a probability $\frac{e^{\epsilon_i}-1}{e^{\tau}-1}$ if $\epsilon_i$ is lower than the threshold. If $\epsilon_i$ is greater than the threshold, it is unconditionally sampled. This algorithm already exists for $\Phi$-PDP, but it has not been proved whether the algorithm can also be applied to an $(\epsilon,\delta)$-DP algorithm and satisfy $(\Phi,\Delta)$-PDP. We thus prove the following theorem:

\begin{restatable}[]{theorem}{samthm}
\label{thm:sam}
The mechanism $\mathcal{M}_{s}$ satisfies ($\Phi$,$\Delta$)-PDP, where $\Delta = \{\delta_i | \delta_i = \pi_i \delta,u_i\in \mathcal{U} \}$.
\end{restatable}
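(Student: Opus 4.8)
The plan is to recognize the claim as a per-user instance of \emph{privacy amplification by subsampling}: the sampling probability $\pi_i$ is chosen precisely so that subsampling the tuple of user $u_i$ at rate $\pi_i$ turns the base $(\tau,\delta)$-DP guarantee of $A_{DP}^{\tau,\delta}$ into an $(\epsilon_i,\delta_i)$ guarantee for that user. Concretely, to establish $(\Phi,\Delta)$-PDP it suffices to fix an arbitrary user $u_i$ and an arbitrary pair of neighboring datasets $D \stackrel{d}{\sim} D'$ whose differing tuple $d$ belongs to $u_i$, and to show $Pr[\mathcal{M}_s(D)\in O]\le e^{\epsilon_i}Pr[\mathcal{M}_s(D')\in O]+\delta_i$ for every $O\subseteq R$, with $\delta_i=\pi_i\delta$. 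I would treat the cases $\epsilon_i\ge\tau$ and $\epsilon_i<\tau$ separately, since the former has $\pi_i=1$ and collapses to the base guarantee.

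First I would set up a coupling of the internal randomness of $f_s$ on $D$ and on $D'$. Because sampling is independent across records and $D,D'$ agree on every record other than $d$, I couple the two runs so that every shared record receives the same inclusion decision; let $T$ denote the (random) subset of shared records that are retained. Writing $D=D'\cup\{d\}$ for the case where $D$ is the larger set, the retained set is $T\cup\{d\}$ with probability $\pi_i$ and $T$ with probability $1-\pi_i$ on the $D$ side, while it is exactly $T$ on the $D'$ side. The key point is that, for every fixed $T$, the sets $T$ and $T\cup\{d\}$ are themselves neighboring inputs to $A_{DP}^{\tau,\delta}$, so I may apply its $(\tau,\delta)$-DP guarantee pointwise in $T$.

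Abbreviating $\alpha(T)=Pr[A_{DP}^{\tau,\delta}(T)\in O]$ and $\beta(T)=Pr[A_{DP}^{\tau,\delta}(T\cup\{d\})\in O]$, the coupling gives $Pr[\mathcal{M}_s(D)\in O]=\mathbb{E}_T[(1-\pi_i)\alpha(T)+\pi_i\beta(T)]$ and $Pr[\mathcal{M}_s(D')\in O]=\mathbb{E}_T[\alpha(T)]$. Substituting the base inequality $\beta(T)\le e^{\tau}\alpha(T)+\delta$ and pulling the expectation through by linearity yields the amplified factor $1+\pi_i(e^{\tau}-1)$ together with the additive term $\pi_i\delta$. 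It then remains to verify the arithmetic identity $1+\pi_i(e^{\tau}-1)=e^{\epsilon_i}$, which holds by the very definition $\pi_i=(e^{\epsilon_i}-1)/(e^{\tau}-1)$, so the factor is exactly $e^{\epsilon_i}$ and the additive term is exactly $\delta_i$. When $\epsilon_i\ge\tau$ we instead have $\pi_i=1$, so $f_s$ retains $d$ deterministically, $\mathcal{M}_s$ inherits $(\tau,\delta)$-DP directly, and $e^{\tau}\le e^{\epsilon_i}$ together with $\delta=\delta_i$ finishes that case.

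The main obstacle I anticipate is discharging \emph{both} orderings of the neighboring relation. The computation above is clean in the ``remove'' direction, where the larger dataset is the one fed to $\mathcal{M}_s(D)$. In the reverse (``add'') direction one must instead lower-bound $Pr[\mathcal{M}_s(D')\in O]$, which forces the use of the companion inequality $\alpha(T)\le e^{\tau}\beta(T)+\delta$ and a short case analysis to confirm that the same factor $e^{\epsilon_i}$ and slack $\delta_i$ still dominate; keeping the $\delta$-bookkeeping tight here is the delicate part. I would also take care to state the coupling rigorously, since the independence of the per-record Bernoulli sampling is exactly what licenses the pointwise application of $(\tau,\delta)$-DP, and this is the step that actually drives the amplification.
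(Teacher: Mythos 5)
Your proposal is correct and takes essentially the same route as the paper's proof: your coupling and expectation over the retained set $T$ is exactly the paper's decomposition $\sum_{Z\subseteq D_{-x}}Pr[f_s(D_{-x})=Z]\bigl(\pi_i\,Pr[A(Z_{+x})\in O]+(1-\pi_i)\,Pr[A(Z)\in O]\bigr)$, with the amplified factor obtained from the same identity $1-\pi_i+\pi_i e^{\tau}\le e^{\epsilon_i}$ (the paper folds your separate $\pi_i=1$ case into this inequality rather than treating it apart). The reverse-direction bookkeeping you flag as delicate is discharged in the paper exactly as your sketch anticipates, via the companion bound together with $\frac{\pi_i e^{-\tau}}{\pi_i e^{-\tau}+1-\pi_i}\le\pi_i$ and $\pi_i e^{-\tau}+1-\pi_i\ge e^{-\epsilon_i}$.
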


The proof can be found in Section~\ref{appendix:samtheoremproof}.

\paragraph{Thresholding}

The sampling mechanism improves the conventional DP mechanism in terms of reducing privacy waste, but can be improved by choosing better threshold $\tau$. Suppose we use the sampling mechanism to sample data with probability $\pi$ and use the sampled data for the ($\tau$, $\delta$)-DP mechanism. First, there is privacy waste in the sampled data due to the thresholding.  If $\epsilon_i=0.9$ for user $u_i$, and the threshold $\tau=0.6$, $u_i$ can be overprotected by a budget of 0.3. Second, there is also privacy waste for the rest of the data that is not sampled as it is completely unused (i.e., $\epsilon=0$), which results in a utility loss by overprotection. To further reduce the budget wastes, \citet{niu2021adapdp} suggests a solution to compute the threshold using the utility loss and apply a multiple sampling mechanism. So we adapt the thresholding method using the below loss.

The threshold is set in the range of $\min \Phi \leq \tau \leq \max \Phi$, and among these values, the value with the lowest estimated utility loss is selected. We estimate the loss using the following equation, which is a weighted sum of two losses:
\begin{equation}
\vspace{-0.2cm}
\label{eqn:floss}
    \begin{aligned}
        loss_{f}(\Phi,\,\tau)=w_1\cdot\underbrace{\sum_{i:\epsilon_i<t,\epsilon_i \in \Phi} \epsilon_i(1-\frac{e^{\epsilon_i}-1}{e^{\tau}-1})}_{waste_u(\Phi,\tau)}+w_2\cdot\underbrace{\sum_{i:\epsilon_i>t,\epsilon_i \in \Phi}(\epsilon_i-\tau)}_{waste_s(\Phi,\tau)}
    \end{aligned}
\end{equation}
where the weights $w_i$ are parameters for controlling the importance of each loss. 

The losses penalize the budget wastes of the two cases mentioned in the previous paragraph. $waste_u$ is when the data point is not sampled, and in this case, it is set to be proportional to $1-\pi_i$.  $waste_s$ is the waste caused by the threshold after sampling and is computed as the sum of $\epsilon_i-\tau$. In addition, we suggest an adaptive loss varying weights during multi-round sampling based on the inequality of arithmetic and geometric mean's equality condition between two type of wastes. The loss and weights are as follows:
\begin{equation}
\label{eqn:aloss}
    \begin{aligned}
        loss_a(\Phi,\tau)= \frac{2\cdot waste_u \cdot waste_s}{waste_u+waste_s} \; \mathrm{when} \;\;w_1=\frac{waste_u}{waste_u+waste_s},\; w_2=\frac{waste_s}{waste_u+waste_s} 
    \end{aligned}
\end{equation}
After deriving the minimum threshold $\tau$, we use it to run the sampling mechanism. In fact, this $loss_a$ does not return the exact optimal point but approximate minimum because the inequality of arithmetic and geometric mean's lower bound $2 \sqrt{w_1 \cdot w_2 \cdot waste_u \cdot waste_s}$ is not a constant. However, this $loss_a$ saves the trouble of finding optimal value for $w_1$, $w_2$ of $loss_f$. The comparison of accuracy results using $loss_f$ and $loss_a$ can be found in Section ~\ref{sec:acc}.

\paragraph{Embedding to DP-SGD}

We now describe the final \ours{} algorithm, which tightly integrates personalized sampling by embedding it within the SGD process (Algorithm~\ref{alg:emb}). We define a round to be a group of iterations. We set the number of iterations in one round to be $n$, and the total number of founds to be $k$. $\epsilon_R$ is a target privacy budget of the round $R$. For each round, we first compute a threshold (Step 2) for privacy budgets at the beginning of each round using the {\em fixed-weight} loss (Eqn.~\ref{eqn:floss}) or {\em adaptive-weight} loss (Eqn.~\ref{eqn:aloss}). We then invoke $getNoiseMultiplier$, which computes a noise multiplier for the given inputs (Step 3). In our setting, we use the noise scale computing function based on binary search in Opacus~\citep{opacus} (see detailed algorithm in Section~\ref{appendix:getnoisemltplr}). We then perform sampling using the rates in Definition~\ref{def:pdpsam} where higher privacy budgets lead to more frequent sampling (Step 4). The weighted sampler for PDP uses approximate Poisson sampling in Opacus~\citep{opacus} for efficiency (see more details in Section~\ref{appendix:pdpsampler}). We then apply DP-SGD for one round, which includes gradient clipping, adding noise, and gradient descent (Steps 6--9). We then invoke $PrivacyAccounting$, which can be any privacy accountant including RDP~\citep{mironov2017renyi}, GDP~\citep{bu2020deep}, or PRV~\citep{gopi2021numerical}, to produce $\epsilon'$ (Step 10). For our implementation, we use an RDP accountant. We then subtract the privacy budget of each sample by $\epsilon'$ (Step 11). Other implementation details are in Section~\ref{appendix:implementation}.

\begin{algorithm}[t]
\caption{Personalized DP-SGD ({\em PDP-SGD})}\label{alg:emb}
\begin{algorithmic}[1]
\Require Examples $D=\{x_1,...,x_n\}$, loss function $l(\theta)=\frac{1}{n}\Sigma_i l(\theta, x_i)$. 
Parameters: learning rate $\mu_R$, gradient norm bound $C$, number of rounds $k$, epochs per round $n$, privacy budgets $\Phi=\{\epsilon_1,...,\epsilon_n\}$, probability of privacy leakage $\delta$.

\For{$R=1,2,...,k$}

\State $\tau_R \gets \min(loss(\Phi,\,\tau)$,  $\min \Phi \leq \tau \leq \max \Phi$) \Comment{Compute threshold}
\State $\sigma \gets$ getNoiseMultiplier$(\tau_R , \delta, n)$

\State $D_R \gets$ Sampling($D$, $\Phi$, $\tau_R$) \Comment{Sampling Mechanism}

\For{$t=1,2,...,n$}

\State $g_R(x_i) \gets \nabla_\theta l(\theta_R ; x_i), \forall x_i \in D_R$ \Comment{Compute gradients}
\State $\bar{g_R}(x_i) \gets g_R(x_i)\cdot \min (1,\frac{C}{\left\|g_R(x_i)\right\|_2}), \forall x_i \in D_R$ \Comment{Clip gradients}
\State $\tilde{g_R} \gets \frac{1}{n} \sum_{i=1}^{n}(\bar{g_R}(x_i)+N(0,\sigma^2C^2\mathbb{I})), \forall x_i \in D_R$ \Comment{Add noise}
\State $\theta_{R+1} \gets \theta_R-\mu_R\tilde{g_R}$ \Comment{Take gradient step}

\EndFor

\State $\epsilon' \gets$ PrivacyAccounting$(\sigma,\,k,\,R)$

\State $\epsilon_i \gets \max(\epsilon_i-\epsilon', \,0)$, $\forall i \in \{i|x_i \in D_t\}$ \Comment{Compute remaining budgets}
\EndFor

\Ensure $\theta_R$ 

\end{algorithmic}

\end{algorithm}

\subsection{$(\Phi,\Delta)$-PDP Guarantee}
\label{sec:prove}

We show that our algorithm satisfies $(\Phi,\Delta)$-PDP. Before proving the privacy guarantee, we need to prove composition and post-processing theorems for $(\Phi, \Delta)$-PDP below. We note that these theorems have not been proved in any other work. From a privacy analysis standpoint, our algorithm can be viewed as a sequential aggregation of multiple sampling algorithms, each of which satisfies $(\Phi, \Delta)$-PDP according to Theorem~\ref{thm:sam}. 



\begin{restatable}[Composition Theorem]{theorem}{compositionthm}
\label{thm:cmp}
Let $T_1 (D) : D \rightarrow T_1 (D) \in \mathcal{C }_1$ be an $(\Phi_1 , \Delta_1 )$-PDP function, and for any $s_1 \in \mathcal{C}_1$, $T_2(D,s_1 ) : (D, s_1 ) \rightarrow T_2 (D,s_1 ) \in \mathcal{C}_2 $ be an $(\Phi_2 , \Delta_2 )$-PDP function given the second input $s_1$. We show that for any neighboring $D, D'$ such that $D\stackrel d\sim D'$ where $d$ is the data record of an arbitrary user $u_i \in \mathcal{U}$, for any $S \subseteq \mathcal{C}_2 \times \mathcal{C}_1$, 
\begin{align*}
    &Pr[(T_2 (D,T_1 (D)),T_1 (D) ) \in S] \leq\\
    &e^{\epsilon_{1i} + \epsilon_{2i}} Pr[(T_2 (D',T_1 (D')) , T_1 (D') ) \in S] + \delta_{1i} + \delta_{2i} 
\end{align*}
where $\epsilon_{1i} \in \Phi_1$, $\epsilon_{2i} \in \Phi_2$, $\delta_{1i} \in \Delta_1$, and $\delta_{2i} \in \Delta_2$.
\end{restatable}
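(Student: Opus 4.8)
The plan is to treat this as an adaptive (sequential) composition and reduce everything to a single fixed user. Fix neighboring $D \stackrel{d}{\sim} D'$ where the changed record $d$ belongs to user $u_i$. The crucial bookkeeping point, which I would establish up front, is that \emph{the same index $i$ governs both mechanisms}: since the record that differs is still $u_i$'s, the $(\Phi_1,\Delta_1)$-PDP guarantee of $T_1$ gives $Pr[T_1(D)\in O] \le e^{\epsilon_{1i}} Pr[T_1(D')\in O] + \delta_{1i}$, and the $(\Phi_2,\Delta_2)$-PDP guarantee of $T_2$ \emph{for each fixed $s_1$} gives $Pr[T_2(D,s_1)\in O] \le e^{\epsilon_{2i}} Pr[T_2(D',s_1)\in O] + \delta_{2i}$, with the very same user's parameters. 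First I would factor the joint output law: writing $Y_1 = T_1(D)$, $Y_2 = T_2(D,Y_1)$, the density of $(Y_2,Y_1)$ at $(s_2,s_1)$ is $P(s_2,s_1) = p_1(s_1)\,p_{2,s_1}(s_2)$, where $p_1$ is the density of $T_1(D)$ and $p_{2,s_1}$ that of $T_2(D,s_1)$; likewise $Q(s_2,s_1) = q_1(s_1)\,q_{2,s_1}(s_2)$ for $D'$. I would carry out the discrete/density case and remark that the general case is identical with measures.

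The key technical tool I would prove first is a one-sided decomposition of an approximate-DP guarantee into a ``dominated'' part plus a small remainder. From $T_1$'s bound I would set $g_1 := \min(p_1, e^{\epsilon_{1i}} q_1)$ and $r_1 := (p_1 - e^{\epsilon_{1i}} q_1)_+$, so that $p_1 = g_1 + r_1$ with $g_1 \le e^{\epsilon_{1i}} q_1$ pointwise and $\int r_1 = \sup_O\big(Pr[T_1(D)\in O] - e^{\epsilon_{1i}} Pr[T_1(D')\in O]\big) \le \delta_{1i}$, the supremum being attained on $\{p_1 > e^{\epsilon_{1i}} q_1\}$. Applying the identical split to $T_2(\cdot,s_1)$ for every fixed $s_1$ yields $p_{2,s_1} = g_{2,s_1} + r_{2,s_1}$ with $g_{2,s_1} \le e^{\epsilon_{2i}} q_{2,s_1}$ and $\int r_{2,s_1}\,ds_2 \le \delta_{2i}$.

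Then I would substitute into the product and regroup the terms \emph{asymmetrically}: $p_1 p_{2,s_1} = g_1 g_{2,s_1} + g_1 r_{2,s_1} + r_1(g_{2,s_1}+r_{2,s_1}) = g_1 g_{2,s_1} + g_1 r_{2,s_1} + r_1 p_{2,s_1}$. The dominated term satisfies $g_1 g_{2,s_1} \le e^{\epsilon_{1i}+\epsilon_{2i}} q_1 q_{2,s_1}$, so its integral over any $S$ is at most $e^{\epsilon_{1i}+\epsilon_{2i}} Pr[(Y_2',Y_1')\in S]$. For the two error terms I would integrate over the whole space (which only increases them, by nonnegativity): $\int g_1(s_1)\big(\int r_{2,s_1}\big) \le \delta_{2i}\int g_1 \le \delta_{2i}$ using $\int g_1 \le \int p_1 = 1$, and $\int r_1(s_1)\big(\int p_{2,s_1}\big) = \int r_1 \le \delta_{1i}$ using $\int p_{2,s_1} = 1$. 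Summing produces exactly $e^{\epsilon_{1i}+\epsilon_{2i}} Pr[(Y_2',Y_1')\in S] + \delta_{1i} + \delta_{2i}$, which is the claim.

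The hard part will be obtaining the \emph{clean} additive slack $\delta_{1i}+\delta_{2i}$ rather than a weighted version such as $e^{\epsilon_{2i}}\delta_{1i}+\delta_{2i}$ or an extra cross term $\delta_{1i}\delta_{2i}$, both of which a naive ``apply $T_2$'s bound, then $T_1$'s bound'' argument produces. The resolution is precisely the asymmetric grouping above, pairing $r_1$ with the \emph{full} $p_{2,s_1}$ instead of with $g_{2,s_1}$: this lets me exploit the normalizations $\int g_1 \le 1$ and $\int p_{2,s_1} = 1$ and never multiply a $\delta$ by an $e^{\epsilon}$. A secondary subtlety to state carefully is why $T_2$'s guarantee may be invoked pointwise in $s_1$ with $u_i$'s parameters, which is exactly the hypothesis that $T_2(\cdot,s_1)$ is $(\Phi_2,\Delta_2)$-PDP given the second input $s_1$.
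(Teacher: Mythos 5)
Your proof is correct and reaches exactly the paper's bound, but by a genuinely different decomposition. The paper reproduces the Dwork et al.\ adaptive-composition argument at the \emph{event} level: it introduces a residual measure $\mu_i$ on $\mathcal{C}_1$ with $\mu_i(C_1)=|Pr[T_1(D)\in C_1]-e^{\epsilon_{1i}}Pr[T_1(D')\in C_1]|\le\delta_{1i}$, writes $Pr[T_1(D)\in ds_1]\le e^{\epsilon_{1i}}Pr[T_1(D')\in ds_1]+\mu_i(ds_1)$, and controls the conditional probability by the truncation $\min(e^{\epsilon_{2i}}Pr[(T_2(D',s_1),s_1)\in S]+\delta_{2i},\,1)$; the two devices that keep the slack additive there are pulling $\delta_{2i}$ out of the integral \emph{before} substituting $T_1$'s bound (so it is never multiplied by $e^{\epsilon_{1i}}$) and bounding the truncated integrand by $1$ against $\mu_i$ (so $\delta_{1i}$ is never multiplied by $e^{\epsilon_{2i}}$). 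You instead work at the \emph{density} level, first proving the hockey-stick split $p=\min(p,e^{\epsilon}q)+(p-e^{\epsilon}q)_+$ with $\int(p-e^{\epsilon}q)_+\le\delta$ for \emph{both} mechanisms, then regrouping the product asymmetrically as $g_1g_{2,s_1}+g_1r_{2,s_1}+r_1p_{2,s_1}$. The two arguments encode the same asymmetry---your $r_1$ is the density-level counterpart of the paper's $\mu_i$, and pairing $r_1$ with the full normalized $p_{2,s_1}$ (so $\int p_{2,s_1}\,ds_2=1$) plays precisely the role of the paper's $\min(\cdot,1)\le 1$---but your key lemma never appears in the paper, which does not decompose the second mechanism at all and uses only its event-level guarantee pointwise in $s_1$. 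What your route buys is transparent bookkeeping of where each $\delta$ lands (your explicit diagnosis of why the naive ordering yields $e^{\epsilon_{2i}}\delta_{1i}+\delta_{2i}$ is exactly the difficulty the paper's truncation step is built to evade) and a formulation that extends naturally to dominating-pair style analyses; what the paper's route buys is that it stays with probabilities of events throughout, so it needs no existence of densities and no Hahn-decomposition argument for the general measurable case---the one point your sketch defers with ``identical with measures,'' which is the only loose end to discharge, and is standard.
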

This theorem is a PDP version of the composition theorem in ~\citet{dwork2013algorithmic}. The proof can be found in Section~\ref{appendix:compositiontheoremproof}.

\begin{restatable}[Post-processing for PDP]{theorem}{postpthm}
\label{thm:post}
Let $\mathcal{M}$ be a $(\Phi, \Delta)$-PDP algorithm. For an arbitrary randomized mapping $f$, $f \circ M$ is a $(\Phi, \Delta)$-PDP algorithm.
\end{restatable}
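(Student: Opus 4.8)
The plan is to mirror the classical post-processing proof for $(\epsilon,\delta)$-DP while tracking the per-user parameters. First I would fix neighboring datasets $D \stackrel{d}{\sim} D'$ and let $u_i \in \mathcal{U}$ be the user associated with the differing record $d$; this fixes the relevant pair $(\epsilon_i,\delta_i)$ with $\epsilon_i \in \Phi$ and $\delta_i \in \Delta$. The conceptual point specific to the personalized setting is that $f$ acts only on the output of $\mathcal{M}$ and has no access to the raw records or user identities, so the same user $u_i$ that governs the PDP guarantee of $\mathcal{M}$ also governs the guarantee of $f \circ \mathcal{M}$. Post-processing therefore cannot reshuffle which user controls the bound, and the parameter set $(\Phi,\Delta)$ is preserved unchanged.

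Next I would treat the case where $f$ is deterministic. For an arbitrary output event $T \subseteq R'$, I would pull it back through $f$ to $S = f^{-1}(T) = \{ r \in R : f(r) \in T \} \subseteq R$. Then $Pr[(f \circ \mathcal{M})(D) \in T] = Pr[\mathcal{M}(D) \in S]$, and applying the $(\Phi,\Delta)$-PDP guarantee of $\mathcal{M}$ to the set $S$ with user $u_i$ yields
\begin{equation*}
Pr[\mathcal{M}(D) \in S] \leq e^{\epsilon_i} Pr[\mathcal{M}(D') \in S] + \delta_i = e^{\epsilon_i} Pr[(f\circ\mathcal{M})(D') \in T] + \delta_i,
\end{equation*}
which is exactly the claimed inequality for the deterministic case.

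Finally I would extend to randomized $f$ by viewing it as a mixture of deterministic maps. Conditioning on the internal randomness $\omega$ of $f$ produces a deterministic map $f_\omega$, and I would write $Pr[(f \circ \mathcal{M})(D) \in T] = \mathbb{E}_\omega\!\left[ Pr[ f_\omega(\mathcal{M}(D)) \in T ] \right]$. Applying the deterministic bound inside the expectation and using that the bound $e^{\epsilon_i}(\cdot) + \delta_i$ is affine, together with $\mathbb{E}_\omega[\delta_i] = \delta_i$, recovers the inequality for randomized $f$.

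The only step that requires genuine care is this last one: I must justify that the internal randomness of $f$ is independent of the randomness of $\mathcal{M}$, so that conditioning on $\omega$ legitimately yields a deterministic map applied to the distribution $\mathcal{M}(D)$, and that the affine PDP bound commutes with the expectation over $\omega$. Everything else is the standard pullback argument, essentially identical to the non-personalized case, since the per-user indices $(\epsilon_i,\delta_i)$ are held fixed throughout by the choice of the distinguishing record $d$.
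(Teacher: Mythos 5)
Your proof is correct and takes essentially the same approach as the paper's: the pullback of the output event through a deterministic $f$ (the paper's $Z=\{x : f(x)\in O\}$, your $S=f^{-1}(T)$) followed by one application of the $(\Phi,\Delta)$-PDP inequality of $\mathcal{M}$ with the pair $(\epsilon_i,\delta_i)$ fixed by the differing record. The paper handles randomized $f$ by simply citing that any randomized mapping is a convex combination of deterministic maps, which is precisely the mixture-over-$\omega$ argument you spell out in more detail (including the independence and affinity caveats, which are fine and slightly more careful than the paper).
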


The proof can be found in Section~\ref{appendix:postptheoremproof}.

\begin{theorem}
{\em PDP-SGD} satisfies $(\Phi,\Delta)$-PDP, where $\Phi=\{\epsilon_1,...,\epsilon_n\}$ and $\Delta=\{\delta_i | \delta_i = R_{max}\delta, i\in \{1,...,n\}\}$.
\end{theorem}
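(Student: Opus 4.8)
The plan is to view \ours{} as a sequential composition of $k$ per-round mechanisms and to assemble the three results already in hand: the single-round sampling guarantee (Theorem~\ref{thm:sam}), the composition theorem (Theorem~\ref{thm:cmp}), and the post-processing theorem (Theorem~\ref{thm:post}). Write $M_R$ for the mechanism executed in round $R$, consisting of the personalized sampling step $f_s(\cdot,\Phi,\tau_R)$ (Step 4) followed by the $n$ inner DP-SGD iterations (Steps 6--9). My first goal is to show that each $M_R$ is itself $(\Phi,\Delta^{(R)})$-PDP, and then to stitch the rounds together.

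For a single round I would first argue that the $n$ inner iterations, taken together, form a $(\tau_R,\delta)$-DP mechanism on the sampled dataset $D_R$. This is the standard DP-SGD guarantee: with the noise multiplier $\sigma$ returned by getNoiseMultiplier targeting $(\tau_R,\delta)$ over $n$ iterations, clipping each gradient to norm $C$ and adding $\mathcal{N}(0,\sigma^2 C^2 \mathbb{I})$ yields $(\tau_R,\delta)$-DP under the accountant used in Step 10. With this established, $M_R$ is precisely a $(\tau_R,\delta)$-DP algorithm composed with the personalized sampling preprocessing, i.e., an instance of $\mathcal{M}_s$ from Definition~\ref{def:pdpsam}. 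Theorem~\ref{thm:sam} then gives that $M_R$ is $(\Phi,\Delta^{(R)})$-PDP, with per-user guarantee $(\epsilon_i^{(R)},\pi_i^{(R)}\delta)$, where $\epsilon_i^{(R)}$ is the budget consumed by $u_i$ in round $R$ and $\pi_i^{(R)}$ is its round-$R$ sampling probability.

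Next I would compose across rounds. Since round $R+1$ operates on the parameters $\theta_{R+1}$ produced by round $R$, the sequence fits the dependent-composition setup of Theorem~\ref{thm:cmp}, where the second mechanism receives the first mechanism's output as an extra input. Applying Theorem~\ref{thm:cmp} inductively over $R=1,\dots,k$, the per-user budgets accumulate additively, so $u_i$ is protected at level $\left(\sum_{R}\epsilon_i^{(R)},\ \sum_{R}\pi_i^{(R)}\delta\right)$. The budget bookkeeping in Step 11, namely $\epsilon_i \gets \max(\epsilon_i-\epsilon',0)$, ensures $\sum_R \epsilon_i^{(R)} \le \epsilon_i$, so the $\epsilon$-component respects each user's specification in $\Phi$; once a user's remaining budget reaches $0$ its sampling probability becomes $0$ and it no longer participates. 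For the $\delta$-component, since $\pi_i^{(R)} \le 1$ and each user participates in at most $R_{max}$ rounds, $\sum_R \pi_i^{(R)}\delta \le R_{max}\delta$, matching the claimed $\Delta$. Finally, because \ours{} outputs only the last parameters $\theta_R$, a (possibly randomized) function of the full transcript, Theorem~\ref{thm:post} guarantees that this release preserves $(\Phi,\Delta)$-PDP.

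The main obstacle is the careful bookkeeping in the two steps above: first, verifying that the inner iterations genuinely realize the target $(\tau_R,\delta)$-DP under the chosen noise multiplier and accountant, so that Theorem~\ref{thm:sam} applies cleanly round by round; and second, establishing that Step 11 enforces $\sum_R \epsilon_i^{(R)} \le \epsilon_i$ so the composed $\epsilon$-budget never exceeds each user's requirement while the $\delta$-terms telescope into the uniform bound $R_{max}\delta$. A secondary technical point is that Theorem~\ref{thm:cmp} is stated for two mechanisms, so extending it to $k$ rounds requires a routine induction that tracks the additive accumulation of both the $\epsilon$- and $\delta$-terms and correctly absorbs each round's dependence on prior outputs.
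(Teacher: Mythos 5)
Your proposal is correct and follows essentially the same route as the paper's proof: treat each round as an instance of the sampling mechanism $\mathcal{M}_s$ so Theorem~\ref{thm:sam} gives a per-round $(\epsilon_{i,R},\pi_{i,R}\delta)$ guarantee, compose rounds via Theorem~\ref{thm:cmp}, and use the Step-11 budget subtraction to bound $\sum_R \epsilon_{i,R}\le\epsilon_i$ and $\sum_R \pi_{i,R}\delta\le R_{max}\delta$. Your two additions---explicitly invoking Theorem~\ref{thm:post} for releasing only $\theta_R$ and flagging the induction needed to extend the two-mechanism composition theorem to $k$ rounds---are points the paper leaves implicit, and they tighten rather than change the argument.
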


\begin{proof}
Let $\mathcal{U} = \{ u_1 , ..., u_n \} $ be the set of users where user $u_i$ has privacy preference of $(\epsilon_i,\delta_i)$. Because DP-SGD is a $(\epsilon, \delta)$-DP mechanism, we can say that one round of {\em PDP-SGD} as a mechanism $\mathcal{M}_R$ that satisfies ~\ref{def:pdpsam}. Suppose that $\mathcal{M}_R$ uses $(\tau_R, \delta)$-DP mechanism. 

Let $\mathcal{U}_{s}$ be the set of users of sampled data in the round $R$. For any $u_s \in \mathcal{U}_s$ and $u_t \in \mathcal{U}-\mathcal{U}_s$, let $\epsilon_{s,R}$, $\epsilon_{t,R}$ be the spent privacy budget for the data of $u_s$, $u_t$ in the round $R$. As shown in \citep{niu2021adapdp}, $\epsilon_{t,R}$ is 0 because $\mathcal{M}_R$ does not use data for $u_t$, and $\epsilon_{s,R} = \tau_R$ for $\epsilon_s > \tau_R$, $\epsilon_{s,R} = \epsilon_s$ for $\epsilon_s <= \tau_R$. So we can say that $\mathcal{M}_R$ satisfies $(\Phi_R, \Delta_R)$-PDP where $\Phi_R = \{\epsilon_{s,R} | u_s \in \mathcal{U}_s \} \cup \{ \epsilon_{t,R} | u_t \in \mathcal{U}_t \} $ and $\Delta_R = \{\delta_i | \delta_i = \pi_{i,R} \delta, u_i \in \mathcal{U} \}$. 

By Theorem ~\ref{thm:cmp}, we can say that {\em PDP-SGD} satisfies $(\Phi',\Delta')$-PDP where $\Phi'= \{\epsilon_i | \epsilon_i = \sum_{R=1}^{R_{max}}\epsilon_{i,R}, u_i \in \mathcal{U}\}$, $\Delta' = \{ \delta_i | \delta_i = \sum_{R=1}^{R_{max}} \pi_{i,R} \delta, u_i \in \mathcal{U} \}$. Note that any $\epsilon_{i,R} \in \Phi_R$ is less than or equal to the privacy budget of $u_i$ remaining at the beginning of the round $R$. Because of this property, for all $\epsilon_i \in \Phi$, $\epsilon_i - \sum_{R=1}^{R_{max}}\epsilon_{i,R} \geq 0$. Also,  $R_{max} \delta \geq \sum_{R=1}^{R_{max}}\pi_{i,R} \delta$.

Therefore, {\em PDP-SGD} satisfies $(\Phi,\Delta)$-PDP, where $\Phi=\{\epsilon_1,...,\epsilon_n\}$ and $\Delta=\{\delta_i | \delta_i = R_{max}\delta, i\in \{1,...,n\}\}$.
\end{proof}
We note that each $\delta_i$ increases as there are more rounds, but it is bounded by $R_{max} \delta$ (actually, $\sum_{R=1}^{R_{max}}\pi_{i,R} \delta$). In our experiments, we set $R_{max}$ to be at most 10, so each $\delta_i$ is not large either.

\subsection{RDP Accountant in PDP-SGD}
\label{sec:rdptopdp}

\paragraph{RDP Accountant}

Most privacy accounting methods rely on Renyi Differential Privacy (RDP)~\citep{mironov2017renyi}, which directly bounds the divergence between two distributions.
\begin{definition}
\label{def:rdp}
($(\alpha,\rho)$-Renyi Differential Privacy) A randomized mechanism $\mathcal{M}:\mathcal{D}\rightarrow\mathcal{R}$ is said to have $(\alpha,\rho)$-RDP if for any adjacent $D,D'\in\mathcal{D}$ it holds that
\begin{equation}
    D_\alpha (\mathcal{M}(D)||\mathcal{M}(D')) =\frac{1}{\alpha-1}log E_{o\sim \mathcal{M}(D)}\left[\left(\frac{\Pr[\mathcal{M}(D)=o]}{\Pr[\mathcal{M}(D')=o]}\right)^{\alpha-1}\right] \leq \rho
\end{equation}
\end{definition}

\paragraph{From Individual RDP to PDP} We also cover how to convert individual privacy loss to $(\Phi,\Delta)$-PDP. We introduce the concept of individual RDP and propose the conversion of it to $(\Phi, \Delta)$-PDP. There are several recent works on individual privacy accounting~\citep{feldman2021individual,da2022individual}, and the proposed conversion can be used in these methods.

\citet{feldman2021individual} suggest individual RDP (iRDP) to calculate individual privacy loss. They measure the maximum possible effect of an individual data point on a dataset using this concept. 

\begin{definition}
\label{def:irdp}
(Individual RDP). Fix $m\in\mathbb{N}$ and a data point $X$. A randomized mechanism $\mathcal{M}$ satisfies $(\alpha,\Gamma)$-individual RDP for $X$ if for all datasets $D=(X_1,...,X_n)$ such that $n\leq m$ and $X_i=X$ for some $i$, it holds that
\begin{equation*}
    D_\alpha (\mathcal{M}(D)||\mathcal{M}(D^{-i})) \leq \rho_i,\;\mathrm{where}\;\Gamma=\{\rho_1,...,\rho_n\}.
\end{equation*}
\end{definition}

We show that this iRDP can be converted to the proposed $(\Phi,\Delta)$-PDP in the following theorem:

\begin{restatable}[From Individual RDP to $(\Phi,\Delta)$-PDP]{theorem}{rdpthm}
\label{thm:rdp}
    If $\mathcal{M}$ is an $(\alpha,\Gamma)$-iRDP mechanism, it also satisfies $(\Phi,\Delta)$-PDP where $\Phi=\{\epsilon_i|\epsilon_i=\rho_i+\frac{log1/\delta_i}{\alpha-1}, \,\rho_i\in\Gamma, \,\delta_i\in\Delta, \,i\in [1...n]\}$, $\Delta=\{\delta_1,...,\delta_n\}$, and $\Gamma=\{\rho_1,...,\rho_n\}$.
\end{restatable}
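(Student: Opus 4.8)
The plan is to adapt the standard conversion from R\'enyi DP to $(\epsilon,\delta)$-DP \citep{mironov2017renyi} to the individual setting, carrying the index $i$ through every step so that each user $u_i$ receives its own pair $(\epsilon_i,\delta_i)$ rather than a single global guarantee. First I would fix a user $u_i$ associated with data record $d$ and a pair of neighboring datasets with $D\stackrel d\sim D'$; identifying $D'$ with the removal $D^{-i}$, Definition~\ref{def:irdp} gives the single scalar bound $D_\alpha(\mathcal{M}(D)\|\mathcal{M}(D^{-i}))\le\rho_i$. By Definition~\ref{def:rdp} this is equivalent to the moment inequality $\mathbb{E}_{o\sim P}[(P(o)/Q(o))^{\alpha-1}]\le e^{(\alpha-1)\rho_i}$, where I abbreviate $P=\mathcal{M}(D)$ and $Q=\mathcal{M}(D')$. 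This moment bound is the only input the rest of the argument needs.

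The key steps are then purely per-user. (i) Introduce the privacy-loss variable $L(o)=\log(P(o)/Q(o))$ and the ``bad'' event $B_i=\{o: L(o)\ge\epsilon_i\}$, on which $P(o)\ge e^{\epsilon_i}Q(o)$. (ii) Bound its mass by a Chernoff/Markov argument valid for $\alpha>1$: $\Pr_{o\sim P}[L(o)\ge\epsilon_i]=\Pr_{o\sim P}[e^{(\alpha-1)L(o)}\ge e^{(\alpha-1)\epsilon_i}]\le e^{-(\alpha-1)\epsilon_i}\,\mathbb{E}_{o\sim P}[(P(o)/Q(o))^{\alpha-1}]$, and substitute the moment bound to get $e^{(\alpha-1)(\rho_i-\epsilon_i)}$. (iii) Choose $\epsilon_i=\rho_i+\frac{\log(1/\delta_i)}{\alpha-1}$, which makes $(\alpha-1)(\rho_i-\epsilon_i)=\log\delta_i$, so the tail is exactly $\delta_i$. (iv) For an arbitrary output set $O$, split $\Pr[\mathcal{M}(D)\in O]$ over $O\cap B_i$ and $O\setminus B_i$: the first contributes at most $\Pr_{o\sim P}[B_i]\le\delta_i$, and on the second $P(o)<e^{\epsilon_i}Q(o)$, so it is at most $e^{\epsilon_i}\Pr[\mathcal{M}(D')\in O]$. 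Summing yields $\Pr[\mathcal{M}(D)\in O]\le e^{\epsilon_i}\Pr[\mathcal{M}(D')\in O]+\delta_i$, which is precisely the per-user inequality of Definition~\ref{def:pdpdp}. Collecting this over all $i$ produces $\Phi=\{\rho_i+\frac{\log(1/\delta_i)}{\alpha-1}\}$ and $\Delta=\{\delta_1,\dots,\delta_n\}$ as claimed.

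The main obstacle I expect is the asymmetry of R\'enyi divergence versus the two-sided nature of the PDP inequality. Definition~\ref{def:irdp} bounds only $D_\alpha(\mathcal{M}(D)\|\mathcal{M}(D^{-i}))$, the ``add'' direction, whereas a complete $(\Phi,\Delta)$-PDP guarantee (if $\stackrel d\sim$ is read symmetrically) also requires the reverse ordering; I would address this by noting the identical Markov argument applies to $D_\alpha(\mathcal{M}(D^{-i})\|\mathcal{M}(D))$ when that bound is available, and by explicitly confirming that the record-removal adjacency of Definition~\ref{def:irdp} coincides with the neighboring relation for $u_i$ in Definition~\ref{def:pdpdp}. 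A secondary, routine point is justifying the monotonicity used in the Markov step (requiring $\alpha>1$) and handling $P(o)/Q(o)$ uniformly in the discrete and continuous cases. Everything beyond these two points is straightforward bookkeeping of the index $i$.
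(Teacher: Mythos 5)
Your proof is correct, yields exactly the claimed constants, and takes a genuinely different route from the paper's. The paper follows Mironov's original conversion argument: it invokes, without proof, the probability-preservation inequality $\Pr[\mathcal{M}(D)\in S]\le\bigl\{e^{\rho_i}\Pr[\mathcal{M}(D^{-i})\in S]\bigr\}^{1-1/\alpha}$ (a consequence of H\"older's inequality, taken from \citet{mironov2017renyi}) and then splits into two cases according to whether $e^{\rho_i}\Pr[\mathcal{M}(D^{-i})\in S]$ exceeds $\delta_i^{\alpha/(\alpha-1)}$: the first case produces the multiplicative factor $\exp\bigl(\rho_i+\frac{\log(1/\delta_i)}{\alpha-1}\bigr)$, the second caps the probability at $\delta_i$, and the two are merged via $\max(a,b)\le a+b$. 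You instead run the moments-accountant-style tail argument: Markov's inequality applied to $e^{(\alpha-1)L(o)}$ bounds the bad event $B_i=\{o: L(o)\ge\epsilon_i\}$ by $e^{(\alpha-1)(\rho_i-\epsilon_i)}=\delta_i$, and the output set is decomposed over $B_i$ and its complement. Both are standard proofs of the RDP-to-DP conversion, and carrying the index $i$ through is harmless in either, since the whole argument is per-neighboring-pair. Your route is more self-contained --- it needs only the definition of $D_\alpha$ as a log-moment plus Markov, where the paper's key lemma is imported unproved --- while the paper's route passes through the marginally tighter intermediate statement $\Pr[\mathcal{M}(D)\in S]\le\max\bigl(e^{\epsilon_i}\Pr[\mathcal{M}(D^{-i})\in S],\,\delta_i\bigr)$ before relaxing to the additive form. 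You also explicitly flag the one-sidedness of Definition~\ref{def:irdp}, which bounds only $D_\alpha(\mathcal{M}(D)\Vert\mathcal{M}(D^{-i}))$; this is an honest caveat that the paper's proof silently shares, since it too establishes only the removal direction of the inequality in Definition~\ref{def:pdpdp}.
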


The proof can be found in Section~\ref{appendix:individualrdptopdp}.





\section{Experiments}
\label{sec:experiments}
In this section, we compare the proposed method with a conventional DP-SGD and simple combinations of DP-SGD with existing PDP mechanisms and answer the following questions: (1) How does our algorithm compare in utility with various baselines? (2) How does the algorithm perform against different hyperparameters, $\epsilon$ distributions, and intervals? (3) How efficient are the algorithms?

\subsection{Experimental Setup}
\label{sec:experimentalsetup}

\paragraph{Datasets}

We use the following real benchmark datasets: (1) MNIST~\citep{deng2012mnist} contains images of handwritten digits, (2) SVHN~\citep{37648} contains images of street view house numbers, and (3) Fashion-MNIST~\citep{xiao2017/online} contains images of different types of clothes.

\paragraph{Baselines}

We compare {\em PDP-SGD} with conventional DP-SGD, which guarantees the same $\epsilon$ for the entire data. We assume that users have individual privacy requirements $(\epsilon_i\in\Phi $ and $ \delta_i\in\Delta)$ and we want to satisfy all user's privacy constraints. We also compare naive combinations with the existing PDP mechanisms ({\em Sampling}~\citep{jorgensen2015conservative} and {{\em AdaPDP}~\citep{niu2021adapdp}). These also satisfy $(\Phi,\Delta)$-PDP. The detailed algorithms and explanations are in Sections~\ref{appendix:baseline1} and~\ref{appendix:baseline2}. 

\begin{figure}[t]
\centering
    \includegraphics[scale=0.43]{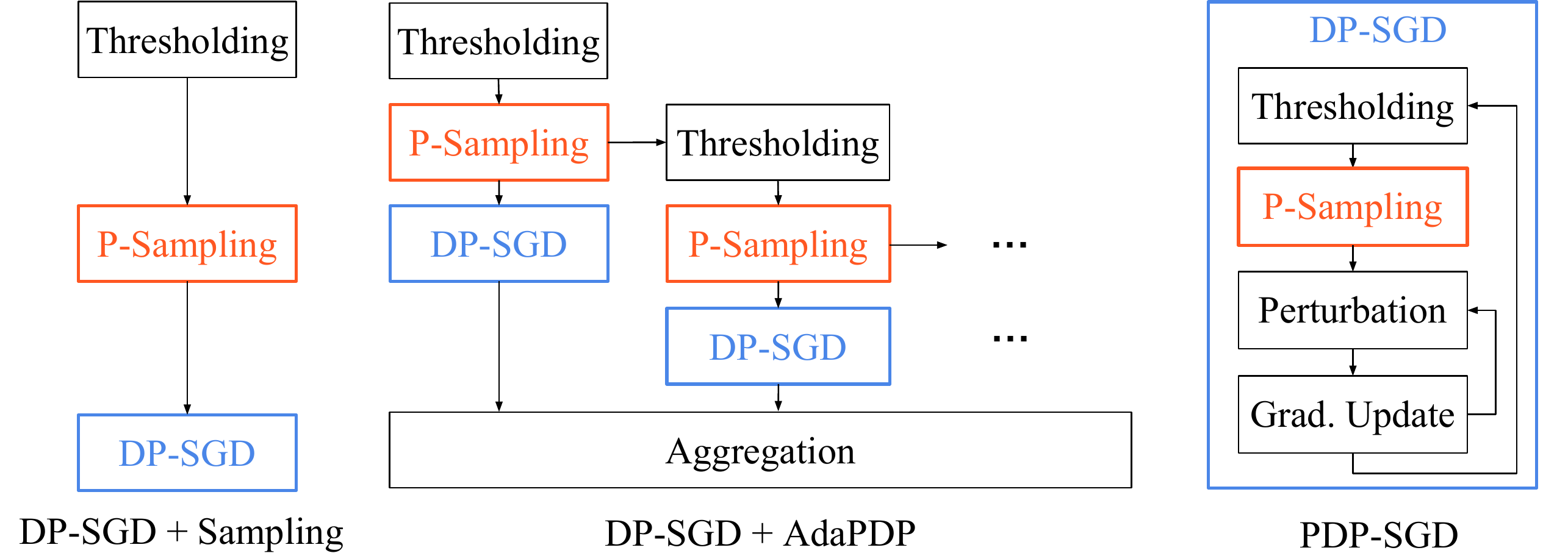}
    \caption{Comparison of the two baselines and {\em PDP-SGD} algorithms satisfying ($\Phi$, $\Delta$)-PDP.}
    \vspace{-0.4cm}
\label{fig:overviews}
\end{figure}

For all models, we use simple convolutional neural networks (CNN). The details on the model architectures can be found in Section~\ref{appendix:modelarchitecture}. We use PyTorch~\citep{pytorch} with Opacus~\citep{opacus}. All experiments are performed with NVIDIA Titan RTX GPUs, and all models are evaluated on separate test sets. All experiments are repeated 3-5 times with different random seeds.

\subsection{Accuracy Results} 
\label{sec:acc}

\begin{table}[t]
\setlength{\tabcolsep}{3pt}
\caption{Test accuracies of the models trained on conventional DP-SGD, PDP baselines, and {\em PDP-SGD} on the three datasets. We evaluate using three different $\epsilon$ distributions in Figure~\ref{fig:skew}.}
\begin{center}
\scalebox{0.9}{
\begin{tabular}{c?c?c|c|c|cc}
\toprule
\multirow{2}{*}{Dataset} & \multirow{2}{*}{Skew} & \multirow{2}{*}{DP-SGD} & DP-SGD & DP-SGD & \multicolumn{2}{c}{{\em PDP-SGD}} \\
 & & & + Sampling & + AdaPDP & Fixed ($loss_f$) & Adaptive ($loss_a$) \\
\midrule
\multirow{3}{*}{\begin{tabular}{c} SVHN \end{tabular}} & $k=-0.2$ & 63.95 $\pm$ 3.78 & 69.54 $\pm$ 1.00 & 69.55 $\pm$ 1.00 & { 73.87 $\pm$ 0.89} & {\bf75.66 $\pm$ 1.19} \\
& $k=0$ & 63.95 $\pm$ 3.78 & 72.64 $\pm$ 0.48 & 72.84 $\pm$ 0.60 & { 76.79 $\pm$ 1.14} & {\bf 77.50 $\pm$ 0.64}\\
& $k=0.2$ & 63.95 $\pm$ 3.78 & 73.19 $\pm$ 0.78 & { 73.24 $\pm$ 0.73} & 71.97 $\pm$ 2.55 & {\bf 77.73 $\pm$ 0.34} \\
\midrule
\multirow{3}{*}{\begin{tabular}{c} Fashion- \\ MNIST \end{tabular}} & $k=-0.2$ & 75.20 $\pm$ 0.73 & 76.44 $\pm$ 0.39 & 76.43 $\pm$ 0.40 & { 78.36 $\pm$ 0.86} & {\bf 78.82 $\pm$ 0.47}\\
& $k=0$ & 75.20 $\pm$ 0.73 & 77.42 $\pm$ 0.23 & 77.58 $\pm$ 0.19 & { 79.46 $\pm$ 0.49} & {\bf 79.81 $\pm$ 1.01 } \\
& $k=0.2$ & 75.20 $\pm$ 0.73 & 78.23 $\pm$ 0.65 & { 78.30 $\pm$ 0.59} & 78.08 $\pm$ 1.34 & {\bf 80.29 $\pm$ 0.31 } \\
\midrule
\multirow{3}{*}{\begin{tabular}{c} MNIST \end{tabular}} & $k=-0.2$ & 90.42 $\pm$ 2.78 & 92.57 $\pm$ 0.13 & 92.57 $\pm$ 0.16 & {\bf 94.65 $\pm$ 0.35} & 94.54 $\pm$ 0.57 \\
& $k=0$ & 90.42 $\pm$ 2.78 & 92.50 $\pm$ 0.55 & 92.78 $\pm$ 0.53 & {\bf 95.19 $\pm$ 0.27} & 95.05 $\pm$ 0.44 \\
& $k=0.2$ & 90.42 $\pm$ 2.78 & 92.48 $\pm$ 0.39 & 92.52 $\pm$ 0.42 & {\bf 95.20 $\pm$ 0.15} & 94.86 $\pm$ 0.24\\

\bottomrule
\end{tabular}
}
\end{center}
\vskip -0.2in
\label{tbl:skew}
\end{table}


We compare the model accuracy results of our algorithm against baselines. For all the experiments, we use $\epsilon$ values within the range ($0.5 \leq \epsilon \leq 1.0$). We fix the minimum $\epsilon$ value to be 0.5 so that DP-SGD will train using the strictest privacy $\epsilon = 0.5$ in order to satisfy all the user privacy constraints.

We demonstrate the key advantage of PDP algorithms, which is being able to support individual $\epsilon$ values. We generate $\epsilon$ values using an exponential distribution and vary the exponent parameter ($k$ value) to adjust skewness. The level of granularity in these distributions may not be realistic, but it allows us to clearly observe how the algorithm's results change in extreme cases. As shown in Figure~\ref{fig:skew}, we choose three $k$ values -0.2, 0.0, and 0.2 in order to evaluate our algorithms on uniform and skewed distributions. We do not use $k$ values larger than 0.2 as the distribution is skewed to the extent that no user has $\epsilon = 0.5$, which is not consistent with our experimental setup regarding the $\epsilon$ range. Likewise, we do not use $k$ values smaller than -0.2 for similar reasons.

\begin{figure}[t]
\centering
    \includegraphics[scale=0.27]{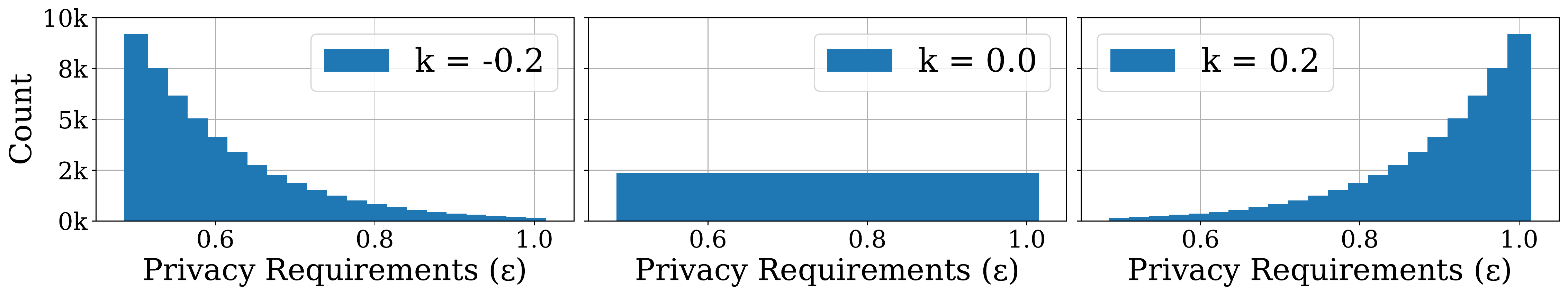}
   \vspace{-0.2cm}
    \caption{We consider three scenarios for user privacy requirements by generating $\epsilon$ values following an exponential distribution (Count($\epsilon$) = $n_{train}(c_1 e^{k\epsilon}+c_2)$, constant values are in Section~\ref{appendix:epsdist})}
    
\label{fig:skew}
\vspace{-0.6cm}
\end{figure}

\paragraph{Varying Weights $w_1$, $w_2$ of $loss_f$}

We evaluate the accuracies of {\em PDP-SGD} using different weight values in Figure~\ref{fig:weight}. We change $w_1$ from 0.3 to 0.9, and $w_2$ is determined by $w_1$. For all $k$ values, the performances increase as $w_1$ increases. However, the result shows similar average accuracies and larger variances when $w_1 \geq 0.7$ since the resulting threshold $\tau$ values do not change much.

\begin{figure}[t]
\centering
    \includegraphics[scale=0.27]{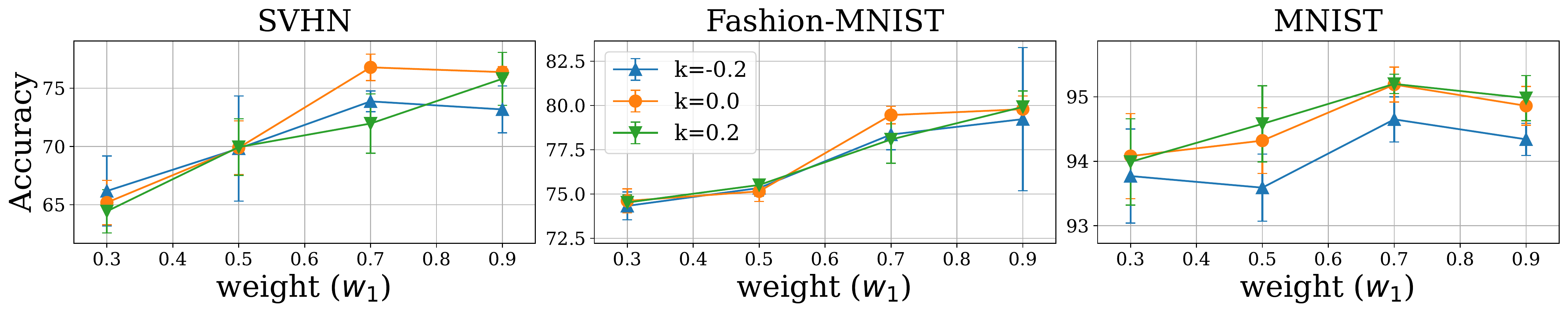}
    \vspace{-0.2cm}
    \caption{Test accuracies of the models trained {\em PDP-SGD} on the three datasets for different $w_1$.}
    \vspace{-0.2cm}
\label{fig:weight}
\end{figure}


\paragraph{Varying Privacy Requirements}
We evaluate the accuracies of {\em PDP-SGD} using different privacy requirements as shown in Figure~\ref{fig:interval}. We use varying $\epsilon$ ranges [0.5, 0.5 + $\alpha$] where the interval $\alpha$ ranges from 0.1 to 0.9 and set $k = 0$ for the exponential distribution of $\epsilon$ values. As the intervals increase, the $\epsilon$ values increase as well, which means that the accuracy has more room to improve as the privacy constraints are more relaxed. The three plots thus show overall increasing trends of test accuracy.
\begin{figure}[t]
\centering
    \includegraphics[scale=0.27]{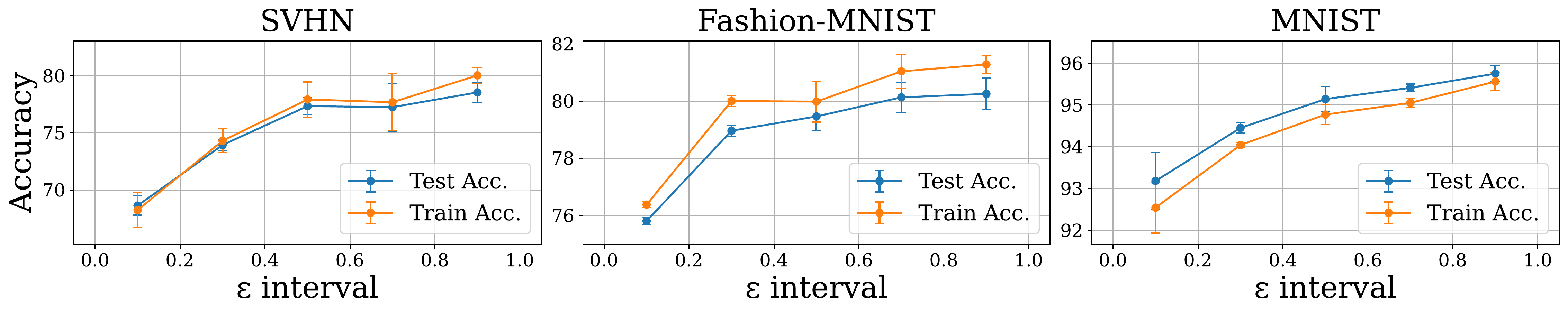}
    \vspace{-0.2cm}
    \caption{Test and train accuracies of the models trained {\em PDP-SGD} on the three datasets for different $\epsilon$ intervals. A larger interval means the $\epsilon$ values can be large, resulting in higher test accuracy.}
\label{fig:interval}
\end{figure}

\paragraph{Comparison with baselines}

Table~\ref{tbl:skew} shows that all PDP algorithms outperform DP-SGD because they are able to effectively use their privacy budgets without degrading accuracy unnecessarily. Among the algorithms, {\em PDP-SGD} tends to have the best accuracies, but also the largest average standard deviations due to more frequent sampling processes than {\em Sampling} and {\em AdaPDP}. We also compare using $loss_f$ (Eqn.~\ref{eqn:floss}) and $loss_a$ (Eqn.~\ref{eqn:aloss}). We observe that using $loss_a$ results in stable performance across different $k$ values. Using $loss_a$ automatically adjusts the weights $w_i$, but the resulting model performance is not always the best depending on the dataset.





\subsection{Efficiency}
\begin{wraptable}{r}{7.cm}
\vspace{-0.4cm}
\caption{Averaged runtimes and the number of iterations and model parameters of the algorithms.}
\vspace{-0.2cm}
\begin{center}
\scalebox{0.9}{
\begin{tabular}{l|ccc}
\toprule
  & Sampling & AdaPDP & {\em PDP-SGD}\\
\midrule
     Runtime (s) & 1086.84  & 1192.90 & 772.20 \\
     \# Iteration & 37,695 & 38,961 & 23,746 \\
     \# Parameter & 38,009 & 72,908 & 38,009 \\
\bottomrule
\end{tabular}
}
\end{center}
\label{tbl:runtime}
\vspace{-0.5cm}
\end{wraptable}
We compare the average runtimes, the average number of total iterations and model parameters of the three algorithms on three datasets in Table~\ref{tbl:runtime}, as it is important to guarantee privacy using less computation. We use the default $\epsilon$ range [0.5, 1] and set $k = 0$ for the distribution of values. As a result, compared to {\em Sampling} and {\em PDP-SGD}, using multiple rounds of iterations does add some overhead to the overall runtime in {\em AdaPDP}, which also requires multiple seperated models. {\em PDP-SGD} shows shorter runtimes than the other algorithms because it uses approximate Poisson sampling with weighted random sampling, which can reduce computation.

\subsection{Comparison with Individualized PATE}
\begin{wraptable}{r}{7.4cm}
\vspace{-0.4cm}
\caption{Test accuracies of the models trained with {\em PDP-SGD} and Individualized PATE (I-PATE).}
\vspace{-0.2cm}
\scalebox{0.9}{
\begin{tabular}{c|c|cc|c}
\toprule
  \multirow{2}{*}{Dataset} & \multirow{2}{*}{Setup} & \multicolumn{2}{c|}{I-PATE}  & \multirow{2}{*}{{\em PDP-SGD}} \\
  & & Upsample & Weight & \\  
\midrule
     \multirow{2}{*}{MNIST} & Case1 & 97.20 & 97.24 & 96.34 \\
     &  Case2 & 97.18 & 97.33 & 96.05 \\
\midrule
     \multirow{2}{*}{SVHN} & Case1 & 64.33 & 66.40 & 74.25 \\
     &  Case2 & 62.85 & 65.65 & 75.45 \\
\bottomrule
\end{tabular}
}
\label{tbl:ipate}
\vspace{-0.3cm}
\end{wraptable}

We compare our algorithm with Individualized PATE~\citep{muhl2022personalized}. We note that comparing DP-SGD and PATE methods is challenging because they require different settings (e.g., PATE uses a public dataset, whereas DP-SGD does not). There is an existing work~\citep{uniyal2021dp} that makes a comparison, but only in terms of fairness. In order to make a fair comparison, we adapt the setting of \citet{liu2020intrinsic} and use PATE's public dataset to pre-train our model with non-private SGD before running our {\em PDP-SGD} algorithm. We also use the dataset and student model architecture in \citet{muhl2022personalized}. We use three privacy groups with different $\epsilon$ values ($1.0,\;2.0,\;3.0$), and we consider two cases of the percentages of the groups (Case1: [34\%, 43\%, 23\%], Case2: [54\%, 37\%, 9\%])). Table~\ref{tbl:ipate} shows that \ours{} outperform I-PATE on SVHN and has comparable performance on MNIST. The reason I-PATE underperforms on SVHN is that the SVHN task is more difficult than the MNIST task, so training separate teacher models on separate partitions is less likely to lead to accurate results. In comparison, \ours{} performs better as it trains on the entire dataset.





\section{Conclusion}

We introduced how to extend $(\epsilon,\delta)$-DP mechanisms to the $(\Phi,\Delta)$-PDP using the personalized sampling mechanism. We proposed an ML training algorithm called \ours{} based on DP-SGD embedding the personalized sampling within the internal process. In addition, we showed that \ours{} theoretically satisfies $(\Phi,\Delta)$-PDP and provided a conversion from individual RDP to $(\Phi,\Delta)$-PDP for individual privacy accounting. The experimental results showed that our algorithms significantly outperform DP-SGD and simple combinations with the existing PDP mechanisms on real data benchmarks.  

\paragraph{Societal Impact \& Limitation} 
Our sampling-based mechanism is effective for personalized DP, but can also be viewed as adding bias to the training data. As a result, the utility results may vary by person, which may be perceived as being discriminating. Further adjusting our sampling to also mitigate bias is an interesting future work.

\bibliography{main}

\begin{thebibliography}{37}
\providecommand{\natexlab}[1]{#1}
\providecommand{\url}[1]{\texttt{#1}}
\expandafter\ifx\csname urlstyle\endcsname\relax
  \providecommand{\doi}[1]{doi: #1}\else
  \providecommand{\doi}{doi: \begingroup \urlstyle{rm}\Url}\fi

\bibitem[Abadi et~al.(2016)Abadi, Chu, Goodfellow, McMahan, Mironov, Talwar,
  and Zhang]{abadi2016deep}
M.~Abadi, A.~Chu, I.~Goodfellow, H.~B. McMahan, I.~Mironov, K.~Talwar, and
  L.~Zhang.
\newblock Deep learning with differential privacy.
\newblock In \emph{SIGSAC}, pages 308--318, 2016.

\bibitem[Alaggan et~al.(2015)Alaggan, Gambs, and
  Kermarrec]{alaggan2015heterogeneous}
M.~Alaggan, S.~Gambs, and A.-M. Kermarrec.
\newblock Heterogeneous differential privacy.
\newblock \emph{arXiv preprint arXiv:1504.06998}, 2015.

\bibitem[Balle et~al.(2018)Balle, Barthe, and
  Gaboardi]{DBLP:conf/nips/BalleBG18}
B.~Balle, G.~Barthe, and M.~Gaboardi.
\newblock Privacy amplification by subsampling: Tight analyses via couplings
  and divergences.
\newblock In S.~Bengio, H.~M. Wallach, H.~Larochelle, K.~Grauman,
  N.~Cesa{-}Bianchi, and R.~Garnett, editors, \emph{NeurIPS}, pages 6280--6290,
  2018.

\bibitem[Boenisch et~al.(2022)Boenisch, Mühl, Rinberg, Ihrig, and
  Dziedzic]{muhl2022personalized}
F.~Boenisch, C.~Mühl, R.~Rinberg, J.~Ihrig, and A.~Dziedzic.
\newblock Individualized pate: Differentially private machine learning with
  individual privacy guarantees.
\newblock \emph{CoRR}, abs/2202.10517, 2022.

\bibitem[Boenisch et~al.(2023)Boenisch, Mühl, Dziedzic, Rinberg, and
  Papernot]{boenisch2023way}
F.~Boenisch, C.~Mühl, A.~Dziedzic, R.~Rinberg, and N.~Papernot.
\newblock Have it your way: Individualized privacy assignment for dp-sgd, 2023.

\bibitem[Bu et~al.(2020)Bu, Dong, Long, and Su]{bu2020deep}
Z.~Bu, J.~Dong, Q.~Long, and W.~J. Su.
\newblock Deep learning with gaussian differential privacy.
\newblock \emph{Harvard data science review}, 2020\penalty0 (23):\penalty0
  10--1162, 2020.

\bibitem[Deng(2012)]{deng2012mnist}
L.~Deng.
\newblock The mnist database of handwritten digit images for machine learning
  research.
\newblock \emph{IEEE Signal Processing Magazine}, 29\penalty0 (6):\penalty0
  141--142, 2012.

\bibitem[Duchi et~al.(2013)Duchi, Jordan, and Wainwright]{duchi2013local}
J.~C. Duchi, M.~I. Jordan, and M.~J. Wainwright.
\newblock Local privacy and statistical minimax rates.
\newblock In \emph{2013 IEEE 54th Annual Symposium on Foundations of Computer
  Science}, pages 429--438. IEEE, 2013.

\bibitem[Dwork and Roth(2013)]{dwork2013algorithmic}
C.~Dwork and A.~Roth.
\newblock The algorithmic foundations of differential privacy.
\newblock \emph{Theoretical Computer Science}, 9\penalty0 (3-4):\penalty0
  211--407, 2013.

\bibitem[Dwork et~al.(2006)Dwork, Kenthapadi, McSherry, Mironov, and
  Naor]{dwork2006our}
C.~Dwork, K.~Kenthapadi, F.~McSherry, I.~Mironov, and M.~Naor.
\newblock Our data, ourselves: Privacy via distributed noise generation.
\newblock In \emph{Advances in Cryptology-EUROCRYPT 2006: 24th Annual
  International Conference on the Theory and Applications of Cryptographic
  Techniques, St. Petersburg, Russia, May 28-June 1, 2006. Proceedings 25},
  pages 486--503. Springer, 2006.

\bibitem[Dwork et~al.(2014)Dwork, Roth, et~al.]{dwork2014algorithmic}
C.~Dwork, A.~Roth, et~al.
\newblock The algorithmic foundations of differential privacy.
\newblock \emph{Foundations and Trends{\textregistered} in Theoretical Computer
  Science}, 9\penalty0 (3--4):\penalty0 211--407, 2014.

\bibitem[Ebadi et~al.(2015)Ebadi, Sands, and Schneider]{ebadi2015differential}
H.~Ebadi, D.~Sands, and G.~Schneider.
\newblock Differential privacy: Now it's getting personal.
\newblock \emph{Acm Sigplan Notices}, 50\penalty0 (1):\penalty0 69--81, 2015.

\bibitem[Feldman and Zrnic(2021)]{feldman2021individual}
V.~Feldman and T.~Zrnic.
\newblock Individual privacy accounting via a renyi filter.
\newblock \emph{NeurIPS}, 34:\penalty0 28080--28091, 2021.

\bibitem[Garfinkel(2022)]{Garfinkel2022Differential}
S.~Garfinkel.
\newblock Differential {Privacy} and the 2020 {US} {Census}.
\newblock \emph{MIT Case Studies in Social and Ethical Responsibilities of
  Computing}, \penalty0 (Winter 2022), 2022.

\bibitem[Ghazi et~al.(2021)Ghazi, Golowich, Kumar, Manurangsi, and
  Zhang]{ghazi2021deep}
B.~Ghazi, N.~Golowich, R.~Kumar, P.~Manurangsi, and C.~Zhang.
\newblock Deep learning with label differential privacy.
\newblock \emph{NeurIPS}, 34:\penalty0 27131--27145, 2021.

\bibitem[Gopi et~al.(2021)Gopi, Lee, and Wutschitz]{gopi2021numerical}
S.~Gopi, Y.~T. Lee, and L.~Wutschitz.
\newblock Numerical composition of differential privacy.
\newblock \emph{Advances in Neural Information Processing Systems},
  34:\penalty0 11631--11642, 2021.

\bibitem[Jordon et~al.(2019)Jordon, Yoon, and van~der
  Schaar]{DBLP:conf/nips/JordonYS19}
J.~Jordon, J.~Yoon, and M.~van~der Schaar.
\newblock Differentially private bagging: Improved utility and cheaper privacy
  than subsample-and-aggregate.
\newblock In \emph{NeurIPS}, pages 4325--4334, 2019.

\bibitem[Jorgensen et~al.(2015)Jorgensen, Yu, and
  Cormode]{jorgensen2015conservative}
Z.~Jorgensen, T.~Yu, and G.~Cormode.
\newblock Conservative or liberal? personalized differential privacy.
\newblock In \emph{ICDE}, pages 1023--1034. IEEE, 2015.

\bibitem[Kim et~al.(2021)Kim, G{\"u}nl{\"u}, and Schaefer]{kim2021federated}
M.~Kim, O.~G{\"u}nl{\"u}, and R.~F. Schaefer.
\newblock Federated learning with local differential privacy: Trade-offs
  between privacy, utility, and communication.
\newblock In \emph{ICASSP}, pages 2650--2654. IEEE, 2021.

\bibitem[Koskela et~al.(2022)Koskela, Tobaben, and
  Honkela]{koskela2022individual}
A.~Koskela, M.~Tobaben, and A.~Honkela.
\newblock Individual privacy accounting with gaussian differential privacy.
\newblock \emph{arXiv preprint arXiv:2209.15596}, 2022.

\bibitem[Li et~al.(2017)Li, Xiong, Ji, and Jiang]{li2017partitioning}
H.~Li, L.~Xiong, Z.~Ji, and X.~Jiang.
\newblock Partitioning-based mechanisms under personalized differential
  privacy.
\newblock In \emph{Advances in Knowledge Discovery and Data Mining: 21st
  Pacific-Asia Conference, PAKDD 2017, Jeju, South Korea, May 23-26, 2017,
  Proceedings, Part I 21}, pages 615--627. Springer, 2017.

\bibitem[Liu et~al.(2020)Liu, Jia, and Gong]{liu2020intrinsic}
H.~Liu, J.~Jia, and N.~Z. Gong.
\newblock On the intrinsic differential privacy of bagging.
\newblock \emph{arXiv preprint arXiv:2008.09845}, 2020.

\bibitem[McMahan and Andrew(2018)]{DBLP:journals/corr/abs-1812-06210}
H.~B. McMahan and G.~Andrew.
\newblock A general approach to adding differential privacy to iterative
  training procedures.
\newblock \emph{CoRR}, abs/1812.06210, 2018.

\bibitem[Mironov(2017)]{mironov2017renyi}
I.~Mironov.
\newblock R{\'e}nyi differential privacy.
\newblock In \emph{CSF}, pages 263--275. IEEE, 2017.

\bibitem[Netzer et~al.(2011)Netzer, Wang, Coates, Bissacco, Wu, and Ng]{37648}
Y.~Netzer, T.~Wang, A.~Coates, A.~Bissacco, B.~Wu, and A.~Y. Ng.
\newblock Reading digits in natural images with unsupervised feature learning.
\newblock In \emph{NIPS Workshop on Deep Learning and Unsupervised Feature
  Learning 2011}, 2011.

\bibitem[Niu et~al.(2021)Niu, Chen, Wang, Wang, Li, and Cao]{niu2021adapdp}
B.~Niu, Y.~Chen, B.~Wang, Z.~Wang, F.~Li, and J.~Cao.
\newblock Adapdp: Adaptive personalized differential privacy.
\newblock In \emph{INFOCOM}, pages 1--10. IEEE, 2021.

\bibitem[Papernot et~al.(2017)Papernot, Abadi, Erlingsson, Goodfellow, and
  Talwar]{papernot2016semi}
N.~Papernot, M.~Abadi, {\'{U}}.~Erlingsson, I.~J. Goodfellow, and K.~Talwar.
\newblock Semi-supervised knowledge transfer for deep learning from private
  training data.
\newblock In \emph{ICLR}, 2017.

\bibitem[Paszke et~al.(2019)Paszke, Gross, Massa, Lerer, Bradbury, Chanan,
  Killeen, Lin, Gimelshein, Antiga, et~al.]{pytorch}
A.~Paszke, S.~Gross, F.~Massa, A.~Lerer, J.~Bradbury, G.~Chanan, T.~Killeen,
  Z.~Lin, N.~Gimelshein, L.~Antiga, et~al.
\newblock Pytorch: An imperative style, high-performance deep learning library.
\newblock \emph{NeurIPS}, 32, 2019.

\bibitem[Soria-Comas et~al.(2017)Soria-Comas, Domingo-Ferrer, S{\'a}nchez, and
  Meg{\'\i}as]{soria2017individual}
J.~Soria-Comas, J.~Domingo-Ferrer, D.~S{\'a}nchez, and D.~Meg{\'\i}as.
\newblock Individual differential privacy: A utility-preserving formulation of
  differential privacy guarantees.
\newblock \emph{IEEE Transactions on Information Forensics and Security},
  12\penalty0 (6):\penalty0 1418--1429, 2017.

\bibitem[Sun et~al.(2021)Sun, Qian, and Chen]{sun2020ldp}
L.~Sun, J.~Qian, and X.~Chen.
\newblock {LDP-FL:} practical private aggregation in federated learning with
  local differential privacy.
\newblock In Z.~Zhou, editor, \emph{IJCAI}, pages 1571--1578, 2021.

\bibitem[Truex et~al.(2020)Truex, Liu, Chow, Gursoy, and Wei]{truex2020ldp}
S.~Truex, L.~Liu, K.-H. Chow, M.~E. Gursoy, and W.~Wei.
\newblock Ldp-fed: Federated learning with local differential privacy.
\newblock In \emph{Proceedings of the Third ACM International Workshop on Edge
  Systems, Analytics and Networking}, pages 61--66, 2020.

\bibitem[Uniyal et~al.(2021)Uniyal, Naidu, Kotti, Singh, Kenfack,
  Mireshghallah, and Trask]{uniyal2021dp}
A.~Uniyal, R.~Naidu, S.~Kotti, S.~Singh, P.~J. Kenfack, F.~Mireshghallah, and
  A.~Trask.
\newblock Dp-sgd vs pate: Which has less disparate impact on model accuracy?
\newblock \emph{arXiv preprint arXiv:2106.12576}, 2021.

\bibitem[Warner(1965)]{warner1965randomized}
S.~L. Warner.
\newblock Randomized response: A survey technique for eliminating evasive
  answer bias.
\newblock \emph{Journal of the American Statistical Association}, 60\penalty0
  (309):\penalty0 63--69, 1965.

\bibitem[Xiao et~al.(2017)Xiao, Rasul, and Vollgraf]{xiao2017/online}
H.~Xiao, K.~Rasul, and R.~Vollgraf.
\newblock Fashion-mnist: a novel image dataset for benchmarking machine
  learning algorithms.
\newblock \emph{CoRR}, abs/1708.07747, 2017.

\bibitem[Yousefpour et~al.(2021)Yousefpour, Shilov, Sablayrolles, Testuggine,
  Prasad, Malek, Nguyen, Gosh, Bharadwaj, Zhao, Cormode, and Mironov]{opacus}
A.~Yousefpour, I.~Shilov, A.~Sablayrolles, D.~Testuggine, K.~Prasad, M.~Malek,
  J.~Nguyen, S.~Gosh, A.~Bharadwaj, J.~Zhao, G.~Cormode, and I.~Mironov.
\newblock Opacus: User-friendly differential privacy library in pytorch.
\newblock \emph{CoRR}, abs/2109.12298, 2021.

\bibitem[Yu et~al.(2022)Yu, Kamath, Kulkarni, Liu, Yin, and
  Zhang]{da2022individual}
D.~Yu, G.~Kamath, J.~Kulkarni, T.-Y. Liu, J.~Yin, and H.~Zhang.
\newblock Individual privacy accounting for differentially private stochastic
  gradient descent, 2022.

\bibitem[Zhao et~al.(2020)Zhao, Zhao, Yang, Wang, Wang, Lyu, Niyato, and
  Lam]{zhao2020local}
Y.~Zhao, J.~Zhao, M.~Yang, T.~Wang, N.~Wang, L.~Lyu, D.~Niyato, and K.-Y. Lam.
\newblock Local differential privacy-based federated learning for internet of
  things.
\newblock \emph{IEEE Internet of Things Journal}, 8\penalty0 (11):\penalty0
  8836--8853, 2020.

\end{thebibliography}
\bibliographystyle{abbrvnat}

\clearpage
\appendix

\section{Variants of Differential Privacy}
While DP is used in various applications, it also has shortcomings, so several variants of DP have been proposed. 

Local Differential Privacy (LDP)~\citep{duchi2013local} is the most studied concept other than DP. In the LDP literature, traditional DP is called centralized DP, and client-side and server-side protections are considered separately. The main premise of centralized DP is that the data server can be trusted, but in real life, there are many cases where privacy must be protected at the client level. The concept that came out from this setup is LDP, which is stricter than conventional DP. In the ML field, LDP is most widely used in Federated Learning~\citep{truex2020ldp,zhao2020local,sun2020ldp,kim2021federated}.

Individual Differential Privacy (IDP)~\citep{soria2017individual} is also a relaxation of DP in order to improve utility. The key approach is to assume a given dataset $D$ and guarantee DP only on that dataset, which is referred to as ($\epsilon, D$)-IDP. Since the focus is on a single dataset, less noise is generated compared to DP, and the definition is more intuitive. Since IDP relaxes DP, it still uses a single $\epsilon$ value and is thus fundamentally different than PDP, which is the focus of this work.

Label Differential Privacy (LabelDP)~\citep{ghazi2021deep} is also a relaxed definition for improving utility, and assumes a situation where only labels must be protected in the process of training an ML model. The Randomized Response (RR) technique that satisfies this LabelDP shows better performance than the conventional RR~\citep{warner1965randomized}.

\section{Baselines}

Continuing from Section~\ref{sec:experimentalsetup}, we conduct experiments with 2 baselines, {\em Sampling} and {\em AdaPDP}, that are naively combined with the existing PDP mechanisms. The detailed mechanisms are explained in the following sections. 

\subsection{DP-SGD with Sampling Mechanism}
\label{appendix:baseline1}

We na\"ively apply the sampling mechanism to DP-SGD as a baseline. The existing sampling mechanism guarantees $\Phi$-PDP extending $\epsilon$-DP mechanism. In this work, we showed that the sampling mechanism can extend $(\epsilon,\delta)$-DP mechanisms in Section~\ref{sec:alg}. We thus can use DP-SGD with Sampling mechanism as a baseline that satisfies $(\Phi,\Delta)$-PDP. The detailed algorithm is defined as follows:

\begin{definition}
(DP-SGD with Sampling) Let $A_{DP}^{\tau,\delta}$ denote a model training algorithm using DP-SGD and satisfying ($\tau, \delta$)-DP. When the model is trained on a dataset $D\subset \mathcal{D}$, let $f_{s}(D,\Phi,\tau)$ denote the preprocessing step that independently samples each data point $x_i \in D$ with probability
\begin{equation}
\label{eq:sam}
    \pi_i=\left\{
    \begin{matrix}
\frac{e^{\epsilon_i}-1}{e^{\tau}-1} \;\;\;\; \text{if}\;\; \epsilon_i<\tau\\ 
\;\;\;1\;\;\;\;\;\text{otherwise}
\end{matrix}\right.
\end{equation}
where $\min \Phi \leq \tau \leq \max \Phi $ is a configurable threshold with privacy budgets $\Phi=\{\epsilon_1,...,\epsilon_n\}$. Finally, the mechanism $\mathcal{M}_s$ is defined as
\begin{equation}
    \mathcal{M}_{s}(D,\Phi,\tau) = A_{DP}^{\tau,\delta}(f_{s}(D,\Phi,\tau)).
\end{equation}
\end{definition}

\subsection{DP-SGD with AdaPDP}
\label{appendix:baseline2}

We naively apply the multi-round sampling mechanism proposed by \citet{niu2021adapdp} to the DP-SGD algorithm as a baseline. Algorithm~\ref{alg:ada} shows the entire procedure. 

Before running DP-SGD, there are two tasks: computing the threshold and sampling. The threshold is set in the range of $\min \Phi \leq th \leq \max \Phi$, and among these values, the value with the lowest estimated utility loss is selected. We estimate the loss using the following equation, which is a weighted sum of two losses:
\begin{equation*}
\vspace{-0.2cm}
    \begin{aligned}
        loss(\Phi,\,\tau)=w_1\cdot\sum_{i:\epsilon_i<t,\epsilon_i \in \Phi} \epsilon_i(1-\frac{e^{\epsilon_i}-1}{e^{\tau}-1})+w_2\cdot\sum_{i:\epsilon_i>t,\epsilon_i \in \Phi}(\epsilon_i-\tau)
    \end{aligned}
\end{equation*}
where the weights $w_i$ are parameters for controlling the importance of each loss. 

The first two losses penalize the budget wastes of the two cases mentioned in the previous paragraph. The first one is when the data point is not sampled, and in this case, it is set to be proportional to $1-\pi_i$.  The second loss is the waste caused by the threshold after sampling and is computed as the sum of $\epsilon_i-\tau$. 

After deriving the minimum threshold $\tau$, we use it to run the sampling mechanism.

Next, we run DP-SGD to satisfy $\tau_T$ for the sampled dataset $D_T$, the remaining budget is calculated by subtracting the actual $\epsilon$ value used from the total budget $\Phi$. Then, the set of models trained in all rounds are returned and a weighted sum is performed on the models' outputs using the threshold and the size of $D_T$ as follows: 
$\theta_{ada}(x) = \sum_T \tau_T |D_T|\theta_T(x) / \sum_T \tau_T |D_T|$. The weighted sum effectively weights the parameters of each model in proportion to the dataset size used for its training. The PrivacyAccounting function can be any accounting function, and we use the RDP accountant.

Figure~\ref{fig:ada} shows how the privacy requirements and sampling probabilities change at each round.
\begin{figure}[t]
\centering
    \includegraphics[scale=0.33]{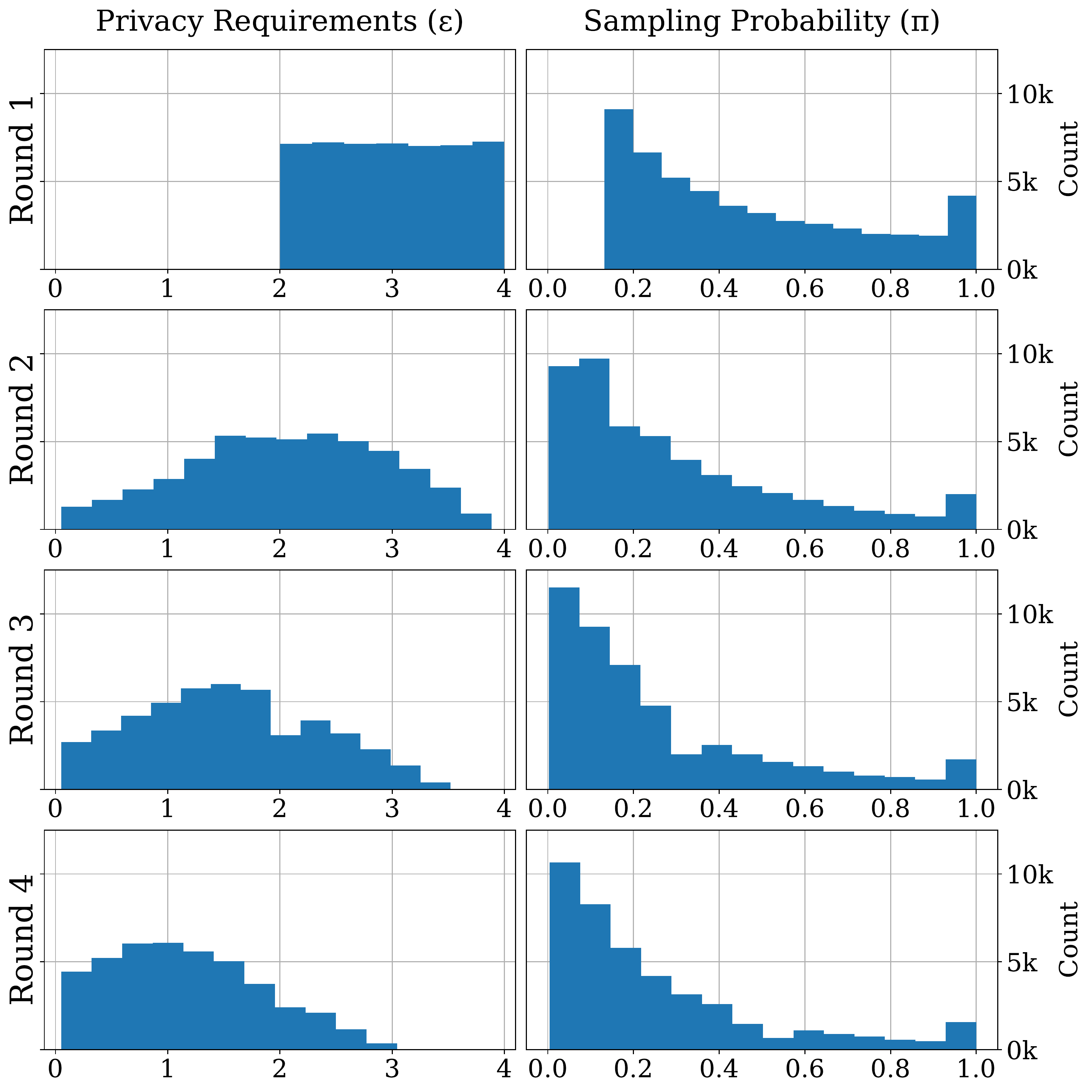}
    \caption{Privacy requirements ($\epsilon$) and sampling probability ($\pi$) when running the {\em AdaPDP} algorithm. For each round, the left figure shows user counts for different $\epsilon$ values, and the right figure user counts for different $\pi$ values.}
    \vspace{-0.2cm}
\label{fig:ada}
\end{figure}

\begin{algorithm}[t]
\caption{DP-SGD with AdaPDP}
\label{alg:ada}
\begin{algorithmic}
\Require Examples $D=\{x_1,...,x_n\}$, loss function $l(\theta)=\frac{1}{n}\Sigma_i l(\theta, x_i)$. Parameters: learning rate $\mu_t$, gradient norm bound $C$, tolerance $\beta$, privacy budgets $\Phi=\{\epsilon_1,...,\epsilon_n\}$.

\For{$T=1,2,...,k$}

\State $\tau_T \gets \min(loss(\Phi,\,\tau)$,  $\min \Phi \leq \tau \leq \max \Phi$) 
\State $D_T \gets$ Sampling($D$, $\Phi$, $\tau_T$) 
\State $t \gets 0$
\While{$\left \| \epsilon-\tau_T \right \| \leq \beta$}
\State $g_t(x_i) \gets \nabla_\theta l(\theta_t ; x_i), \forall x_i \in D_T$ 
\State $\bar{g_t}(x_i) \gets g_t(x_i)\cdot \min (1,\frac{C}{\left\|g_t(x_i)\right\|_2}), \forall x_i \in D_T$ 
\State $\tilde{g_t} \gets \frac{1}{n} \sum_{i=1}^{n}(\bar{g_t}(x_i)+N(0,\sigma^2C^2\mathbb{I}))$ 
\State $\theta_{t+1} \gets \theta_t-\mu_t\tilde{g_t}$ 
\State $\epsilon \gets \epsilon+$ PrivacyAccounting$(\sigma,\,k,\,t)$
\State $t \gets t+1$
\EndWhile
\State $\epsilon_i \gets \max(\epsilon_i-\epsilon, \,0)$, $\forall i \in \{i|x_i \in D_T\}$ 
\State $\theta_T \gets \theta_t$
\EndFor

\Ensure $\{\theta_1,...,\theta_k\}$

\end{algorithmic}
\end{algorithm}

\section{Implementation Details}
\label{appendix:implementation}

\subsection{$\epsilon$ distribution}
\label{appendix:epsdist}

In Section~\ref{sec:acc}, we generate $\epsilon$ values using an exponential distribution and vary the exponent parameter ($k$ value) to adjust skewness. The level of granularity in these distributions may not be realistic, but it allows us to clearly observe how the algorithm's results change in extreme cases. We set the number of privacy groups as 20 and assign an $\epsilon$ value for each group. The number of individual data points in the group with the corresponding $\epsilon$ value follow $\mathrm{Count}(\epsilon)$ = $(c_1 e^{k\epsilon}+c_2) \cdot n_{train}$ where $n_{train}$ is the number of training data points. The constant values are $c_1=2.098,\;c_2=-1.715$ when $k=-0.2$ and $c_1=1.554,\;c_2=-1.715$ when $k = 0.2$. Count($\epsilon$) becomes a constant function when $k=0.0$. In other words, the $\mathrm{Count}(\epsilon)$ follows the distribution applying scaling and parallel shift to $e^{k\epsilon}$ to move privacy groups into the range $[0.5,1]$ and make the total number of data points in the groups same as $n_{train}$ maintaining the exponential distribution.

\subsection{Model architecture and Parameters}
\label{appendix:modelarchitecture}

Continuing from Section~\ref{sec:experimentalsetup}, we explain the two types of CNN architectures used for the three datasets. Table~\ref{tbl:architecture1} shows the architecture for the SVHN dataset, and Table~\ref{tbl:architecture2} shows that for the MNIST and Fashion-MNIST dataset. $K$, $S$, and $P$ represent the size of the kernel, stride, and padding, respectively. We set the learning rate to 0.05 and the batch size to 64. For the DP-SGD parameters, we set the initial noise multiplier to 1.3 and the maximum gradient norm to 1.0.

\begin{table}[h]
\caption{CNN architecture for the SVHN dataset.}
\begin{center}
\begin{small}
\begin{tabular}{c|c|c|c}
\toprule
    Layer & Type of Layer & Parameters & Activation\\
\midrule
    1 & Convolutional & 6, k:5, s:1, p:0 & ReLU \\
    2 & Max Pooling & size (2,2) & - \\
    3 & Convolutional & 16, k:5, s:1, p:0  & ReLU \\
    4 & Max Pooling & size (2,2) & - \\
    5 & Flatten & - & - \\
    6 & Fully Connected & 120 nodes & ReLU \\
    7 & Fully Connected & 84 nodes & ReLU \\
    8 & Fully Connected & 10 nodes & softmax \\
\bottomrule
\end{tabular}
\end{small}
\end{center}
\label{tbl:architecture1}
\end{table}

\begin{table}[h]
\caption{CNN architecture for the MNIST and Fashion-MNIST datasets.}
\begin{center}
\begin{small}
\begin{tabular}{c|c|c|c}
\toprule
    Layer & Type of Layer & Parameters & Activation\\
\midrule
    1 & Convolutional & 16, k:8, s:2, p:3 & ReLU \\
    2 & Max Pooling & size (2,2) & - \\
    3 & Convolutional & 32, k:4, s:2, p:0  & ReLU \\
    4 & Max Pooling & size (2,2) & - \\
    5 & Flatten & - & - \\
    6 & Fully Connected & 32 nodes & ReLU \\
    7 & Fully Connected & 10 nodes & softmax \\
\bottomrule
\end{tabular}
\end{small}
\end{center}
\label{tbl:architecture2}
\end{table}

When computing thresholds in {\em Sampling} and {\em AdaPDP}, we used different weight values.
We used $w_1=0.2,\,w_2=0.8$ for {\em AdaPDP}, and $w_1=0.7,\,w_2=0.3$ for {\em PDP-SGD} with $loss_f$. In general, these weights can be set using cross validation. With $loss_a$, since the weight values are automatically adjusted, we do not have to tune the parameters.

\subsection{PDP Sampler}
\label{appendix:pdpsampler}

PDP Sampler is an extended version of uniform sampler of Opacus~\cite{opacus}. This sampler gets the list of each data's sampling probability and takes Poisson sampling with respect to the sampling probabilities. The detailed algorithm is in Algorithm~\ref{alg:pdpsampler}. The $random()$ function indicates the random number generator in the range $[0,1]$ in Algorithm~\ref{alg:pdpsampler}.

\begin{algorithm}[t]
\caption{PDP Sampler}
\label{alg:pdpsampler}
\begin{algorithmic}
\Require privacy requirements $\Phi$, number of samples $n_{sample}$, number of batches $n_{batch}$.

\While{$n_{batch} > 0$}
\State $y \gets [y_i|y_i=random(0,1),\;i=[1,...,n_{sample}]]$
\State $M \gets [m_i|m_i=\mathbbm{1}[x_i<y_i],\;x_i \in \Phi,\;y_i\in y,\;i=[1,...,n_{sample}]]$
\State $idx \gets [i|m_i \neq 0,\;m_i\in M,\;i=[1,...,n_{sample}]]$
\State yield $idx$
\State $n_{batch} \gets n_{batch}-1$
    
\EndWhile

\end{algorithmic}
\end{algorithm}

\subsection{getNoiseMultiplier function}
\label{appendix:getnoisemltplr}

In Opacus~\cite{opacus}, there is getNoiseMultiplier function using privacy accountant and binary search. We extended the function by adding the history update, so the function can contain the information about previous round's accountant. The details are in Algorithm~\ref{alg:getnoisemltplr}. The history $h$ in the input of Algorithm~\ref{alg:getnoisemltplr} means the history list for privacy accounting, which records noise multiplier, sample rate, and steps. $\mathrm{PrivacyAccounting}$ can be any privacy accountant including RDP~\citep{mironov2017renyi}, GDP~\citep{bu2020deep}, or PRV~\citep{gopi2021numerical}, to produce $\epsilon'$ (Step 10). For our implementation, we use an RDP accountant. 

\begin{algorithm}[t]
\caption{Extended getNoiseMultiplier function}
\label{alg:getnoisemltplr}
\begin{algorithmic}
\Require target privacy budget $\epsilon_t$, $\delta_t$, tolerance $\beta$, sample rate $q$, the number of iterations $k$, history $h$

\State $\epsilon_{high} \gets \infty$
\State $\sigma_{low},\;\sigma_{high}\gets 0,\;10$

\While{$\epsilon_{high}>\epsilon_t$}
\State $\sigma_{high}\gets2\sigma_{high}$
\State $h \gets h + [(\sigma_{high}, q, k)]$
\State $\epsilon_{high}\gets\mathrm{PrivacyAccounting}(h,\;\delta_t)$
\EndWhile

\While{$\epsilon_t-\epsilon_{high}>\beta$}
\State $\sigma\gets(\sigma_{low}+\sigma_{high})/2$
\State $h \gets h + [(\sigma, q, k)]$
\State $\epsilon_{high}\gets\mathrm{PrivacyAccounting}(h,\;\delta_t)$
\EndWhile

\If{$\epsilon < \epsilon_t$}
\State $\sigma_{high},\;\epsilon_{high} \gets \sigma,\;\epsilon$
\Else
\State $\sigma_{low} \gets \sigma$
\EndIf

\Ensure $\sigma_{high}$

\end{algorithmic}
\end{algorithm}

\section{Proofs}

\subsection{Proof of Theorem \ref{thm:sam}}
\label{appendix:samtheoremproof}

Continuing from Section~\ref{sec:prove}, we prove the following theorem.

\samthm*

\begin{proof}
Let $x$ be a data record related to an arbitrary user $u_i$ and $u_i$ has a privacy preference of $\epsilon_i$.
For convenience, we simplify several notions as follows: $f_s(D)=f_s(D,\Phi,\tau)$, $\mathcal{M}_s(D)=\mathcal{M}_s(D,\Phi,\tau)$, and $A(Z) = A^{\tau,\delta}_{DP}(Z)$.

To prove that $\mathcal{M}_s$ satisfies Definition 2.3, we need to prove two cases of inequalities in Definition 2.3: i) $x \in D$ and $x \not\in D'$ ($D'=D_{-x}$) , ii) $x \in D'$ and $x \not\in D$ ($D=D_{-x}$ and $D'=D$).

i) $Pr[{\mathcal{M}_{s}}(D)\in O] \leq e^{\epsilon_i} Pr[\mathcal{M}_{s} (D_{-x}) \in O] + \pi_i \delta$
\begin{flalign*}
    Pr[\mathcal{M}_{s}(D)\in O] &= \sum_{Z\subseteq D_{-x}}\pi_i Pr[f_{s}(D_{-x})=Z]Pr[A(Z_{+x})\in O]&&\\
    &\;\;\;\;+\sum_{Z\subseteq D_{-x}}(1-\pi_i) Pr[f_{s}(D_{-x})=Z]Pr[A(Z)\in O]&&\\
    &\leq \sum_{Z\subseteq D_{-x}}\pi_i Pr[f_{s}(D_{-x})=Z](e^{\tau}Pr[A(Z)\in O]+\delta)&&\\
    & \;\;\;\;+ (1-\pi_i) Pr[\mathcal{M}_{s}(D_{-x})\in O]&&\\
    &= (1-\pi_i+\pi_i e^{\tau}) Pr[\mathcal{M}_{s}(D_{-x})\in O]+\delta\sum_{Z\subseteq D_{-x}}\pi_i Pr[f_{s}(D_{-x})=Z]&&\\
    &\leq e^{\epsilon_i} Pr[\mathcal{M}_{s} (D_{-x}) \in O] + \pi_i \delta&&
\end{flalign*}

Here, the first equality divides $Pr[\mathcal{M}_s(D)\in O]$ into two cases: whether the data $x$ is sampled or not. The first inequality is due to the fact that DP-SGD $A^{\tau,\delta}_{DP}$ satisfies $(\tau,\delta)-DP$ and that $Pr[\mathcal{M}_s(D_{-x})\in O]=\sum_{Z\subseteq D_{-x}} Pr[f_{s}(D_{-x})=Z]Pr[A(Z)\in O]$. The second equality is a rearrangement of terms. The second inequality is due to the fact that $1-\pi_i+\pi_i e^{\tau}\leq e^{\epsilon_i}$ by Equation~\ref{eq:sam}.

ii) $Pr[{\mathcal{M}_{s}}(D_{-x})\in O] \leq e^{\epsilon_i} Pr[\mathcal{M}_{s} (D) \in O] + \pi_i \delta$\\
We take a similar approach as the first case.
\begin{flalign*}
    Pr[{\mathcal{M}_{s}}(D)\in O]&= \pi_i \sum_{Z \subseteq D_{-x}} Pr[f_{s}(D_{-x})=Z]Pr[A (Z_{+x}) \in O] &&\\ 
    &\;\;\;\;+(1- \pi_i ) \sum_{Z \subseteq D_{-x}} Pr[f_{s}(D_{-x})=Z] Pr[A (Z) \in O] &&\\
    & \geq \pi_i \sum_{Z \subseteq D_{-x}} Pr[f_{s}(D_{-x})=Z]e^{-\tau} ( Pr[A (Z) \in O] - \delta )&&\\
    &\;\;\;\;+(1- \pi_i ) Pr[\mathcal{M}_{s} (D_{-x}) \in O]&&\\
    & = ( \pi_i e^{-\tau} +1 - \pi_i ) (Pr[\mathcal{M}_{s} (D_{-x}) \in O]&&\\
    &\;\;\;\;-\delta {\frac{\pi_i e^{-\tau}}{ \pi_i e^{-\tau} +1 - \pi_i}} \sum_{Z \subseteq D_{-x}} Pr[f_{s}(D_{-x}) =Z] )&&\\
    & \geq ( \pi_i e^{-\tau} +1 - \pi_i ) (Pr[\mathcal{M}_{s} (D_{-x}) \in O] - \pi_i \delta ) &&\\
    & \geq e^{- \epsilon_i} (Pr[\mathcal{M}_{s} (D_{-x} ) \in O] - \pi_i \delta )&&
\end{flalign*}

Here, the first inequality is due to the $(\tau,\delta)$-DP of algorithm $A^{\tau,\delta}_{DP}$. The second equality is a rearrangement of terms. The second inequality is due to the fact that the last term is less than or equal to $\delta$ ($\because \frac{\pi_i e^{-\tau}}{ \pi_i e^{-\tau} +1 - \pi_i}\leq \pi_i$ and $\sum_{D_{-x}} Pr[f_{s}(D_{-x}) =Z] \leq 1$). The third inequality is by $\pi_i e^{-\tau} + 1 - \pi_i \geq e^{-\epsilon_i}$.

Therefore, {\em PDP-SGD-sam} mechanism $\mathcal{M}_s$ satisfies the following inequality:
\begin{align*}
    & Pr[{\mathcal{M}_{s}}(D)\in O] \leq e^{\epsilon_i} Pr[\mathcal{M}_{s} (D') \in O] + \pi_i \delta.
\end{align*}
\end{proof}


\subsection{Proof of Theorem \ref{thm:cmp}}
\label{appendix:compositiontheoremproof}

Continuing from Section~\ref{sec:prove}, we prove the following theorem.

\compositionthm*

\begin{proof}
We can follow a similar method with ~\citet{dwork2013algorithmic}, which proved the composition theorem for $(\epsilon,\delta)$-DP. The difference is that this proof is the extended version of the method which can be fitted into $(\Phi, \Delta)$-PDP setting, so privacy parameters can be different for each function and each user.

For all $C_1 \subseteq \mathcal{C}_1$ and user $u_i \in \mathcal{U}$, let $\mu_i$ be a measure on $\mathcal{C}_1$ such that 

$\mu_i (C_1) = |Pr[T_1 (D) \in C_1] - e^{\epsilon_{1i}} P[T_1 (D') \in C_1]|$ 

Because $T_1$ is $(\Phi_1, \Delta_1)$-PDP function, $\mu_i (C_1) \leq \delta_{1i}$.

Then for all $s_1 \in \mathcal{C}_1$, $Pr[T_1(D) \in ds_1]\leq e^{\epsilon_{1i}} Pr[T_1 (D') \in ds_1] +\mu_i(ds_1)$ 

where $ds_1$ be a infinitesimal change of $s_1$.
\begin{flalign*}
    Pr[(T_2 (D,T_1 (D)) , &T_1 (D) ) \in S]\leq \int_{S_1} Pr[(T_2 (D,s_1 ) , s_1) \in S] Pr[T_1 (D) \in ds_1 ]&& \\
    &\leq \int_{S_1} \min(e^{\epsilon_{2i}}Pr[(T_2 (D',s_1 ) , s_1) \in S] + \delta_{2i} , 1)\cdot Pr[T_1 (D) \in ds_1 ] &&\\
    &\;\;\;\;(\because Pr[(T_2(D,s_1),s_1) \in S] \leq 1)&&\\
    &\leq\int_{S_1} (\min(e^{\epsilon_{2i}}Pr[(T_2 (D',s_1 ) , s_1) \in S], 1) + \delta_{2i})\cdot Pr[T_1 (D) \in ds_1]&& \\
    &\leq \int_{S_1} \min(e^{\epsilon_{2i}}Pr[(T_2 (D',s_1 ) , s_1) \in S], 1)\cdot Pr[T_1 (D) \in ds_1]+\delta_{2i}&&\\
    &\leq\int_{S_1} \min(e^{\epsilon_{2i}}Pr[(T_2 (D',s_1 ) , s_1) \in S], 1)&&\\
    &\cdot (e^{\epsilon_{1i}}Pr[T_1 (D') \in ds_1] + \mu_i(ds_1))+\delta_{2i} &&\\
    &\leq e^{\epsilon_{1i}+\epsilon_{2i}}\int_{S_1}Pr[(T_2 (D',s_1 ) , s_1) \in S] Pr[T_1 (D') \in ds_1] + \mu_i(S_1 ) + \delta_{2i}&& \\
    &\leq e^{\epsilon_{1i}+\epsilon{2i}} Pr[(T_2 (D',T_1 (D')) , T_1 (D') ) \in S] + \delta_{1i} + \delta_{2i}&&
\end{flalign*}
\end{proof}

\subsection{Proof of Theorem \ref{thm:post}}
\label{appendix:postptheoremproof}

Continuing from Section~\ref{sec:prove}, we prove the following theorem.

\postpthm*

\begin{proof}
As mentioned in~\citet{dwork2013algorithmic}, if we prove for the case where $f$ is a deterministic function, we can apply it to the randomized mapping case because any randomized mapping can be expressed with the composition of convex combination of deterministic functions.

Let $f:X \rightarrow Y$ and for all $O \subseteq Y$, let $Z = \{ x \in X | f(x) \in O \}$. Then for each user $u_i \in \mathcal{U}$ and its associated data record $x$, and all datasets $D, D' \in \mathcal{D}$ with $D \stackrel x\sim D'$, 
\begin{equation*}
    \begin{aligned}
        Pr[f(\mathcal{M}(D)) \in O]&= Pr[\mathcal{M}(D) \in Z]\\
        &\leq e^{\epsilon_i} Pr[\mathcal{M} (D') \in Z] + \delta_i\\
        &= e^{\epsilon_i}Pr[f(\mathcal{M} (D')) \in O] + \delta_i.
    \end{aligned}
\end{equation*}
\end{proof}

\subsection{Proof of Theorem \ref{thm:rdp}}
\label{appendix:individualrdptopdp}

Continuing from Section~\ref{sec:prove}, we prove the following theorem.

\rdpthm*

\begin{proof}
    If $\mathcal{M}$ satisfies $D_\alpha (\mathcal{M}(D)||\mathcal{M}(D^{-i})) \leq \rho_i$, it also satisfies
\begin{align*}
    Pr[\mathcal{M}(D)\in S] \leq \{e^{\rho_i}Pr[\mathcal{M}(D^{-i})\in S]\}^{1-1/\alpha}
\end{align*}

i) $e^{\rho_i}Pr[\mathcal{M}(D^{-i})\in S]>\delta_i^{\alpha/(\alpha-1)}$
\begin{align*}
    Pr[\mathcal{M}(D)\in S] &\leq \{e^{\rho_i}Pr[\mathcal{M}(D^{-i})\in S]\}^{1-1/\alpha}\\
    &\leq e^{\rho_i}Pr[\mathcal{M}(D^{-i})\in S]\cdot\delta_i^{-1/(\alpha-1)}\\
    &=exp\left ( \rho_i + \frac{log1/\delta_i}{\alpha-1} \right )Pr[\mathcal{M}(D^{-i})\in S]
\end{align*}

ii) $e^{\rho_i}Pr[\mathcal{M}(D^{-i})\in S] \leq \delta_i^{\alpha/(\alpha-1)}$
\begin{align*}
    Pr[\mathcal{M}(D)\in S] \leq \{e^{\rho_i}Pr[\mathcal{M}(D^{-i})\in S]\}^{1-1/\alpha}
    \leq \delta_i
\end{align*}

Therefore,
\begin{align*}
    Pr[\mathcal{M}(D)\in S] &\leq max(e^{\epsilon_i}Pr[\mathcal{M}(D^{-i})\in S],\,\delta_i)\\
    & \leq e^{\epsilon_i}Pr[\mathcal{M}(D^{-i})\in S]+\delta_i
\end{align*}
where $\epsilon_i=\rho_i + \frac{log1/\delta_i}{\alpha-1}$.
\end{proof}

\end{document}